\def\eqref#1{equation~\ref{#1}}
\def\1{\bm{1}}
\def\mA{{\bm{A}}}
\def\mB{{\bm{B}}}
\def\mF{{\bm{F}}}
\def\mG{{\bm{G}}}
\def\mM{{\bm{M}}}
\def\mN{{\bm{N}}}
\def\mP{{\bm{P}}}
\def\mU{{\bm{U}}}
\def\mV{{\bm{V}}}
\def\mW{{\bm{W}}}
\DeclareMathAlphabet{\mathsfit}{\encodingdefault}{\sfdefault}{m}{sl}
\SetMathAlphabet{\mathsfit}{bold}{\encodingdefault}{\sfdefault}{bx}{n}
\newcommand{\R}{\mathbb{R}}
\newtheorem{definition}{Definition}
\newtheorem{theorem}{Theorem}
\title{Sparse Gradient Compression \\ for Fine-Tuning Large Language Models}
\author{\name David H. Yang \email yangd13@rpi.edu \\
      \addr Department of Computer Science\\
      Rensselaer Polytechnic Institute
      \AND
      \name Mohammad Mohammadi Amiri \email mamiri@rpi.edu \\
      \addr Department of Computer Science\\
      Rensselaer Polytechnic Institute
      \AND
      \name Tejaswini Pedapati \email tejaswinip@us.ibm.com \\
      IBM Research
      \AND
      \name Subhajit Chaudhury \email subhajit@ibm.com \\
      IBM Research
      \AND
      \name Pin-Yu Chen \email pin-yu.chen@ibm.com \\
      IBM Research
      }
\begin{document}

\maketitle

\begin{abstract}
Fine-tuning large language models (LLMs) for downstream tasks has become increasingly crucial due to their widespread use and the growing availability of open-source models. However, the high memory costs associated with fine-tuning remain a significant challenge, especially as models increase in size. To address this, parameter efficient fine-tuning (PEFT) methods have been proposed to minimize the number of parameters required for fine-tuning LLMs. However, these approaches often tie the number of optimizer states to dimensions of model parameters, limiting flexibility and control during fine-tuning. In this paper, we propose sparse gradient compression (SGC), a training regime designed to address these limitations. Our approach leverages inherent sparsity in gradients to compress optimizer states by projecting them onto a low-dimensional subspace, with dimensionality independent of the original model's parameters. By enabling optimizer state updates in an arbitrary low-dimensional subspace, SGC offers a flexible tradeoff between memory efficiency and performance. We demonstrate through experiments that SGC can decrease memory usage in optimizer states more effectively than existing PEFT methods. Furthermore, by fine-tuning LLMs on various downstream tasks, we show that SGC can deliver superior performance while substantially lowering optimizer state memory requirements, particularly in both data-limited and memory-limited settings.
\end{abstract}

\section{Introduction}
Large language models (LLMs) are increasingly being used across various disciplines, achieving remarkable performance in a wide range of natural language processing tasks. With the release of more open-source models, demand is growing to adapt them to downstream tasks \citep{touvron2023llama2openfoundation,dubey2024llama3herdmodels}. This is typically achieved using full fine-tuning, where all the parameters of a model are updated. However, as LLMs scale to billions of parameters, fine-tuning all the parameters of a model becomes increasingly challenging, demanding substantial memory resources.

Full fine-tuning requires not only storing billions of model weights, but also maintaining both the gradients and optimizer states needed during training, which can drastically increase the memory consumption \citep{chowdhery2022palmscalinglanguagemodeling,bai2023qwentechnicalreport}. For example, the Adam optimizer requires storing both the first-and second-order moments of the gradients, doubling the memory needed compared to storing the model's trainable parameters \citep{kingma2017adammethodstochasticoptimization}. These memory constraints limit the practical ability to fine-tune LLMs, particularly in resource-constrained environments such as edge devices or personal computing platforms.

To address this problem, parameter efficient fine-tuning (PEFT) techniques have been introduced, to train a model using a significantly smaller number of parameters \citep{ding2023parameter,han2024parameterefficientfinetuninglargemodels}. However, many existing methods lack the ability to provide both \textit{flexible} and \textit{granular} control over the number of optimizer states used for fine-tuning. Flexibility refers to the capacity to accommodate a broad range in the number of optimizer states, while granular control refers to the precision with which the number of optimizer states can be adjusted in small increments. This limitation may hinder the realization of a broader range of memory-performance tradeoffs, thereby restricting the potential of PEFT methods to achieve further efficiency gains.

On the one end, we have approaches like BitFit \citep{zaken2022bitfitsimpleparameterefficientfinetuning}, which fine-tune only the bias terms, using a minimal number of parameters, but is neither flexible nor offers granular control. On the other hand, the popular low-rank adaptation (LoRA) is a more flexible approach that provides some control over the number of trainable parameters \citep{hu2021loralowrankadaptationlarge}. However, there still exists limitations to both flexibility and granularity. LoRA reparameterizes the fine-tuned weight matrices $\mW^{(1)} \in \R^{m \times n}$ into $\mW^{(1)} = \mW^{(0)} + \mB\mA$, where $\mW^{(0)} \in \R^{m \times n}$ is the frozen pre-trained weight matrix, and $\mA \in \R^{r\times n}$ and $\mB \in \R^{m \times r}$ are two low-rank matrices of rank $r$ ($r \ll \text{min}\{m, n\}$) to be trained. However, with LoRA, the number of optimizer states is a function of the dimensions of $\mA$ and $\mB$, which are dependent on $n$ and $m$, respectively. The minimum number of trainable parameters (achieved when $r=1$) is equal to $n + m$, limited by the dimensions of $\mW^{(0)}$. Therefore, there exists a bound dependent on $n + m$ in which we cannot reduce the number of optimizer states during fine-tuning any further. Likewise, the granularity over parameters is also a function of $n$ and $m$, and notice that both flexibility and granularity are impacted negatively with larger models. A similar limitation exists with many other approaches using prefix-tuning \citep{li2021prefixtuningoptimizingcontinuousprompts} and gradient compression approaches, such as GaLore \citep{zhao2024galorememoryefficientllmtraining} (see Appendix \ref{galore_appendix}).

To address the above limitation, we propose sparse gradient compression (SGC), a training regime that enables more flexible and granular control over the number of parameters to train during fine-tuning. SGC updates the optimizer states in a $k$-dimensional subspace, where $k$ is independent of the original parameters dimension and represents the number of optimizer states. This allows SGC to significantly reduce the number of optimizer states, irrespective of the pretrained model's size, with $k$ providing flexibility to balance performance and memory efficiency (see Figure \ref{fig:linenumber}). Importantly, this memory saving comes without sacrificing performance, as we will demonstrate in our experimental results.

The key idea behind SGC is leveraging the inherent sparsity of gradients during fine-tuning. By linearly projecting the optimizer states onto an arbitrarily lower-dimensional subspace, we can perform updates in this compressed space instead of the original space. A sparse recovery algorithm is then used to project the result of the optimizer function back into the original space, estimating the full-dimensional sparse vector from its lower dimensional representation, with sparsity originating from the gradients. By fine-tuning LLaMA2-7B, LLaMA3-8B, and LLaMa2-13B \citep{touvron2023llama2openfoundation,dubey2024llama3herdmodels} on commonsense reasoning tasks, we show that SGC achieves comparable or better results than other PEFT methods while using a significantly smaller number of optimizer states. Additionally, we show that our approach yields improved fine-tuning performance in both data-limited and memory-limited scenarios.

\begin{figure}[t!]
    \centering   \includegraphics[width=\linewidth]{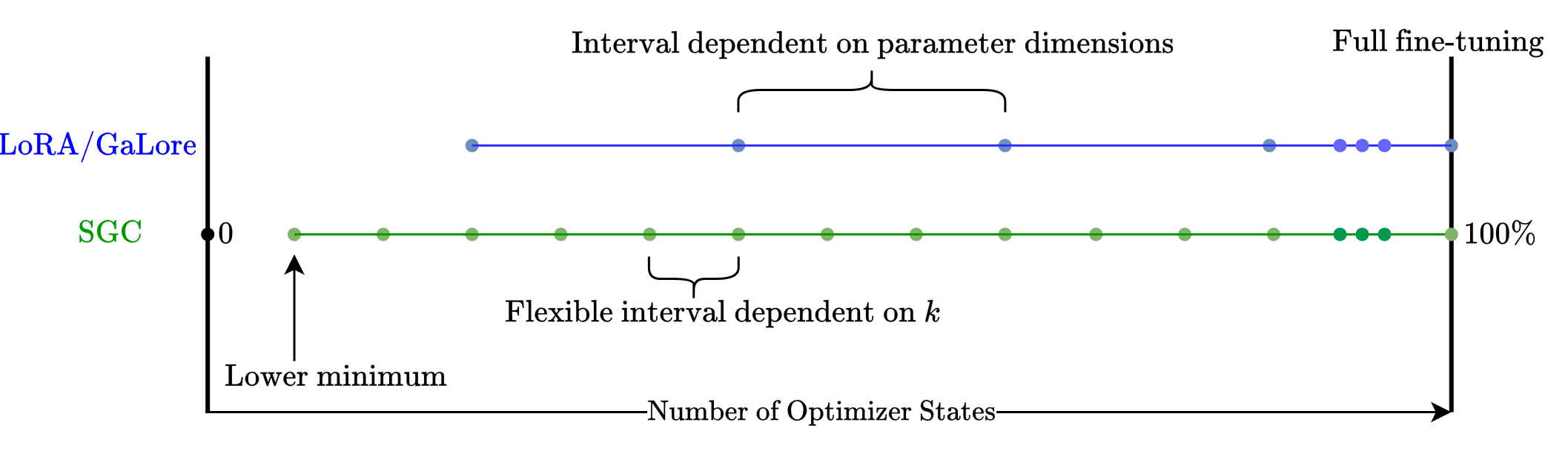}
    \caption{Diagram comparing SGC (green) and PEFT methods LoRA and GaLore (blue) in terms of the dimension of optimizer states compared to full fine-tuning. SGC enables a lower minimum and finer granularity for the number of optimizer states since it is independent of parameter dimensions.}
    \label{fig:linenumber}
\end{figure}

\section{Related Works}
\textbf{Parameter Efficient Fine-tuning}. PEFT methods are used to reduce the expensive memory requirements for fine-tuning large models. Existing techniques can be split into several categories. Adapter-based methods introduce additional trainable modules that are inserted into the original frozen model \citep{houlsby2019parameterefficienttransferlearningnlp,pfeiffer2021adapterfusionnondestructivetaskcomposition,he2022unifiedviewparameterefficienttransfer,mahabadi2021parameterefficientmultitaskfinetuningtransformers}. However, these approaches may increase latency during inference. Prompt tuning, on the other hand, adapts a model by adding learnable prefix tokens to the input \citep{li2021prefixtuningoptimizingcontinuousprompts,lester2021powerscaleparameterefficientprompt, liu2022ptuningv2prompttuning}. Despite their simplicity, these methods have structural limitations since they only train additional input tokens. LoRA is a widely used PEFT method that does not introduce additional inference latency \citep{hu2021loralowrankadaptationlarge}. LoRA employs low-rank matrices to approximate the updates in the parameters during fine-tuning. Several variants of LoRA have been developed to either improve performance or further reduce the number of trainable parameters \citep{zhang2023adaloraadaptivebudgetallocation,xia2024chainloraefficientfinetuning,liu2024doraweightdecomposedlowrankadaptation,kopiczko2024veravectorbasedrandommatrix}. Due to LoRA's popularity, extensive research has been conducted on both its theoretical foundations and empirical performance \citep{jang2024loratrainingntkregime,hayou2024loraefficientlowrank,mao2024surveyloralargelanguage}. Additionally, quantization-based methods have been proposed to further reduce memory overhead \cite{dettmers2023qloraefficientfinetuningquantized,qin2024accuratelorafinetuningquantizationllms}. 

\textbf{Gradient Compression}. An area that has been relatively underexplored but is now gaining attention is gradient compression \citep{zhao2024galorememoryefficientllmtraining,hao2024floralowrankadapterssecretly,liang2024memoryefficientllmtrainingonline,wu2024cgfedllmcompressgradientsfederated,song2024sparsefinetuningpretrainedlarge}. These approaches selectively compress gradient information to reduce the size of optimizer states during training. One category of methods uses projection matrices to obtain a lower-rank gradients \citep{zhao2024galorememoryefficientllmtraining,hao2024floralowrankadapterssecretly,liang2024memoryefficientllmtrainingonline}. For instance, GaLore uses singular value decomposition (SVD) to obtain projection matrices \citep{zhao2024galorememoryefficientllmtraining}, while FLoRA utilizes random projection matrices \citep{hao2024floralowrankadapterssecretly}. \citet{liang2024memoryefficientllmtrainingonline} propose a method that updates the projection matrix in an online fashion using principal component analysis. Alongside projection matrices, gradient sparsity is another emerging factor. SIFT shows that gradients are approximately sparse, and achieves efficient fine-tuning by selecting parameters corresponding to the largest gradient magnitudes \citep{song2024sparsefinetuningpretrainedlarge}. However, a significant limitation of this approach is that the selected parameters remain static, failing to fully capture the dynamic nature of gradient sparsity patterns during training.

\section{Problem Formulation}
We investigate the task of updating the parameters of a neural network, $\mW \in \R^d$, focusing specifically on \textit{fine-tuning}, and without introducing any new weights into the model's architecture. The objective is to adapt pretrained weights $\mW^{(0)} \in \R^{d}$ to $\mW^{(1)} \in \R^{d}$ for a particular task.\footnote{Without loss of generality, we represent model parameters as vectors instead of matrices.} The transition from $\mW^{(0)}$ to $\mW^{(1)}$ is defined as follows:
\begin{equation}
    \mW^{(1)} = \mW^{(0)} + \Delta{\mW}.
\label{addition}
\end{equation}
The parameter update process involves minimizing a loss function $\mathcal{L}$ with respect to $\mW$ as follows:
\begin{equation}
    \underset{\mW}{\min} \text{ } \mathcal{L}(\mW^{(0)} + \Delta\mW),
\end{equation}
where we change the parameters in $\mW$ minimizing $\mathcal{L}$ to achieve $\mW^{(1)}$ from $\mW^{(0)}$.
With no closed-form solution, the above problem is solved iteratively using the gradient signal $\mG_t = \nabla_{\mW_t} \mathcal{L} \in \R^d$ at every time step $t$, where $\mW_t$ denotes the parameters in $\mW$ at time $t$. Typically, to improve fine-tuning performance, an optimizer function $\rho_t(\cdot)$ is applied to the gradient $\mG_t$, where $\rho_t$ requires storing and updating additional optimizer states, each with the same dimensions as $\mG_t$. Therefore, the computational complexity and the memory requirements of applying the optimizer function is directly dependent on $d$, the dimension of $\mG_t$.

With the emergence of LLMs, $d$ has grown substantially, making execution of the optimizer function $\rho_t(\cdot)$ highly resource-intensive. To address this, we define a transformation function that reduces the dimension of $\mG_t$ before being used in the optimizer function $\rho_t$. Specifically, we define $f: \R^d \rightarrow \R^k$ as the transformation function applied to the gradient $\mG_t$ as $\hat{\mG}_t = f(\mG_t)$ for some $k \ll d$. Now we use $\hat{\mG}_t$ as the input to the optimizer function $\rho_t$, reducing the dimension of the operations in the optimizer from a $d$-dimensional space to a $k$-dimensional space. The parameter update $\mW$ for a single time step can be written as follows:
\begin{equation}
    \mW_{t+1} = \mW_{t} - \eta g(\rho_t(\hat{\mG}_t)),
\label{updates}
\end{equation}
where $\eta$ is the learning rate, and $g: \R^{k} \rightarrow \R^{d}$ is a transformation function that brings the output of $\rho_t$ back into the original $d$-dimensional space. We then denote the total changes in the parameters $\mW$ after $T$ time steps as:
\begin{equation}
    \mW^{(1)} = \mW^{(0)} - \eta \sum_t g(\rho_t(\hat{\mG}_t)).
\label{fullupdates}
\end{equation}
This formulation allows us to perform the optimizer state updates in a smaller subspace $\R^k$ instead of the original space $\R^d$, where $k \ll d$. In practice, tracking the optimizer states in $\rho_t$ can be memory intensive if $k$ is large. Thus, the goal is to reduce $k$ as much as possible while maintaining reasonable performance in minimizing $\mathcal{L}$.

\section{Methodology}
In this section, we introduce our proposed method for performing updates on a $k$-dimensional subspace. We begin by motivating our approach with an overview of the well-known AdamW optimizer \citep{kingma2017adammethodstochasticoptimization,loshchilov2019decoupledweightdecayregularization}, followed by a detailed description of the gradient compression and decomposition processes. In addition, we present two more efficient variants of the proposed approach along with an analysis of memory requirements.
\subsection{Motivation}
Full fine-tuning model parameters $\mW^{(0)}$ corresponds to the case where all parameters in $\mW^{(0)}$ are updated, i.e., $f$ is the identity function and $\hat{\mG}_t = \mG_t$. If $\rho_t$ is also the identity function, i.e. we use no optimizer function, the updates simplify to stochastic gradient descent (SGD), and calculating $\Delta{\mW}$ requires storing no optimizer states. However, using an optimizer function that makes use of momentum often yields better performance during fine-tuning. In this paper, we focus on the popular AdamW optimizer (see Algorithm \ref{ADAM}), while both our formulation and proposed approach can be applied to various other optimizers. For full fine-tuning, AdamW requires storing two states $\mM_t \in \R^d$ and $\mV_t \in \R^d$ corresponding to the first and second moments, whose updates are controlled with hyperparameters $\beta_1 \in [0, 1]$ and $\beta_2 \in [0, 1]$, respectively. Taking this into account, the parameter update requires $2d$ memory in total to store $\mM_t$ and $\mV_t$. We note that $(\cdot)^2$ and $\sqrt{\cdot}$ applied to vectors are element-wise square and square-root operations, and $\epsilon$ is a small constant to ensure numerical stability during division. With $g$ being the identify function, we have
\begin{equation}
    \mW_{t + 1} = \mW_t - \eta \mN_{t}, \quad \mN_t = \frac{\mM_t}{\sqrt{\mV_t} + \epsilon}.
\end{equation}
Optimizer functions like AdamW contribute a large proportion of memory consumption during fine-tuning, and we will show how our approach aims to tackle this.

\begin{algorithm}[t]
\caption{AdamW at timestep \textit{t}}
\label{ADAM}
\begin{algorithmic}[1]
\Require $\mG_t, \beta_1, \beta_2, \epsilon$
\State $\mM_t \gets \beta_1 \mM_{t-1} + (1 - \beta_1) \mG_t$
\State $\mV_t \gets \beta_2 \mV_{t-1} + (1 - \beta_2) \mG_t^{2}$
\State $\mM_t \gets \frac{\mM_t}{1 - \beta_1^t}$
\State $\mV_t \gets \frac{\mV_t}{1 - \beta_2^t}$
\State $\mN_t \gets \frac{\mM_t}{\sqrt{\mV_t} + \epsilon}$
\State \Return $\mN_t$
\end{algorithmic}
\end{algorithm}

\subsection{Sparse Gradient Compression (SGC)}
In full fine-tuning, the gradients that are used as input in the AdamW algorithm can have a large dimension $d$. We would like to modify Algorithm \ref{ADAM} to update $\mM_t$ and $\mV_t$ on a $k$-dimensional subspace rather than the $d$-dimensional space, for some $k \ll d$, while retaining performance. This would significantly enhance the memory and compute efficiency of the optimizer, improving the efficiency of fine-tuning. We highlight that $\mM_t$ and $\mV_t$ are functions of $\mG_t \in \R^d$ and $\mG^2_t \in \R^d$, respectively. Therefore, in order to perform the operations on $\mM_t$ and $\mV_t$ in a $k$-dimensional subspace, we need to represent $\mG_t$ and $\mG^2_t$ on that subspace. We make use of the observation that $\mG_t$ is a quasi-sparse vector \citep{song2024sparsefinetuningpretrainedlarge} and can be compressed to a lower dimensional subspace to reduce memory usage in the optimizer function since both $\mM_t$ and $\mV_t$ can also be represented in the lower dimensional subspace. This enables us to conduct fine-tuning with much greater efficiency and control over the memory usage.

We first sparsify $\mG_t \in \R^d$ by keeping only \textit{s} non-zero elements corresponding to $s$ entries with largest magnitudes, and set all other elements to zero which is denoted by $\text{Sparsify}_s(\cdot)$. The sparsified gradient is then projected onto a lower dimensional subspace of an arbitrary dimension $k$ using a projection matrix $\mA \in \R^{k \times d}$ that is initialized before fine-tuning:
\begin{equation}
    \Tilde{\mG}_t = \text{Sparsify}_s(\mG_t) \in \R^d, \quad \bm{p}_t = \mA \Tilde{\mG}_t \in \R^k.
\label{projection}
\end{equation}
To compress $\mG^2_t$, we use the fact that element-wise squares retain the sparsity pattern of $\mG_t$. Thus, similar to $\mG_t$, we can represent $\mG^2_t$ on the $k$-dimensional subspace through
\begin{equation}
    \bm{q}_t = \mA \Tilde{\mG}^2_t \in \R^k.
\end{equation}
With $\mG_t$ and $\mG^2_t$ represented in a compressed form with dimension $k$ as $\bm{p}_t$ and $\bm{q}_t$, respectively, we modify Algorithm \ref{ADAM} by representing $\mM_t$ and $\mV_t$ in this $k$-dimensional subspace as follows:
\begin{align}
    \mM_t \leftarrow \beta_1 \mM_{t-1} + (1-\beta_1) \bm{p}_t, \label{m_update}\\
    \mV_t \leftarrow \beta_1 \mV_{t-1} + (1-\beta_1) \bm{q}_t. \label{v_update}
\end{align}
Accordingly, we can perform the updates on optimizer states $\mM_t$ and $\mV_t$ on a $k$-dimensional subspace since $\bm{p}_t$ and $\bm{q}_t$ are $k$-dimensional. However, we need to go back to the original $d$-dimensional space to perform the weight updates from $\mW_t$ to $\mW_{t+1}$. As indicated in \eqref{updates}, this transform is conducted using the function $g:\R^k\rightarrow\R^d$. Rewriting \eqref{fullupdates}, this problem is equivalent to finding a function $g(\cdot)$ to perform the update
\begin{equation}
    \mW^{(1)} = \mW^{(0)} - \eta \sum_t g(\rho_t(\bm{p}_t, \bm{q}_t)).
\end{equation}
Thus, this approach enables performing the updates on a $k$-dimensional subspace instead of the $d$-dimensional space using AdamW. The only missing part is how to define $g(\cdot)$ that enables going from a $k$-dimensional subspace back to the original $d$-dimensional space for the parameter updates. Next, we introduce an approach to achieve such $g(\cdot)$ functionality.

\subsection{Compressed Sensing of Optimizer States}
Ideally, we would like to use $\mG_t$ and $\mG^2_t$ or their respective sparse versions $\Tilde{\mG}_t$ and $\Tilde{\mG}^2_t$ for the optimizer algorithms; however, for enhancing efficiency we instead use $\bm{p}_t$ and $\bm{q}_t$. We note that $\bm{p}_t$ and $\bm{q}_t$ are the results of linear projection of sparse vectors $\Tilde{\mG}_t$ and $\Tilde{\mG}^2_t$, respectively, onto a $k$-dimensional subspace. Thus, function $g(\cdot)$ should provide a good estimate of $\Tilde{\mG}_t$ and $\Tilde{\mG}^2_t$ when applied to $\bm{p}_t$ and $\bm{q}_t$, respectively. As a result, the problem is to estimate the sparse vectors $\Tilde{\mG}_t$ and $\Tilde{\mG}^2_t$ from their compressed forms, $\bm{p}_t$ and $\bm{q}_t$, respectively, compressed with linear projection.

We use a recovery algorithm from compressive sensing (CS) to achieve the function $g(\cdot)$, which aims to estimate a sparse vector from its compressed form, compressed through linear projection. CS is a signal processing technique used to recover signals using fewer measurements than the Nyquist rate, when the signals are sparse \citep{candes2004robustuncertaintyprinciplesexact,donoho2006compressed}. Consider an $s$-sparse signal $\bm{x} \in \R^d$  with \textit{s} non-zero entries. We can reconstruct $\bm{x}$ from a set of linear measurements $\bm{y} = \mA \bm{x}$, if the measurement matrix $\mA \in \R^{k \times d}$ satisfies the restricted isometry property (RIP) for some number of measurements $k \le d$ \citep{candes2005decodinglinearprogramming,candes2008restricted}. The RIP conditions can be satisfied with high probability if every element of $\mA$ is independent and identically distributed according to a zero-mean normal distribution with standard deviation $1/\sqrt{k}$, and $k \ge \kappa s$, where $\kappa$ is an algorithm dependent constant \citep{candes2004robustuncertaintyprinciplesexact}.

There exist various recovery algorithms to recover the $d$-dimensional $s$-sparse signal $\bm{x}$ from measurements $\bm{y}$ \citep{marques2018review}. In this paper, we use a greedy algorithm named orthogonal matching pursuit (OMP) \citep{pati1993orthogonal}. To enhance efficiency, inspired by \citet{zhu2020efficient}, we have developed a GPU optimized version of OMP, enabling its seamless integration with fine-tuning (see Appendix \ref{omp_appendix} for details). The OMP algorithm reconstructs an $s$-sparse vector $\bm{x}$ from the measurements $\bm{y}$ having knowledge about the measurement matrix $\mA$ denoted as follows:
\begin{equation}
    \hat{\bm{x}} = \text{OMP}_{\mA}(\bm{y}).
\label{OMP}
\end{equation}
We now apply the recovery algorithm OMP to map the updates $\mM_t$ and $\mV_t$, given in equations \ref{m_update} and \ref{v_update}, respectively, from the $k$-dimensional subspace back to the original $d$-dimensional space. With the initialization $\mM_0=\bm{0}$ and $\mV_0=\bm{0}$, we can rewrite the updates $\mM_t$ and $\mV_t$ as:
\begin{equation}
\mM_t = \mA \sum_{i=1}^t h_i(\beta_{1}) \Tilde{\mG_i}, \quad \mV_t = \mA \sum_{i=1}^t h_i(\beta_{2}) \Tilde{\mG^2_i}
    \label{mv_decomposed}
\end{equation}
where $h_i(\cdot)$ is a constant only a function of $\beta_1$ or $\beta_2$. We observe that $\sum_{i=1}^t h_i(\beta_{1}) \Tilde{\mG_i}$ and $\sum_{i=1}^t h_i(\beta_{2}) \Tilde{\mG^2_i}$ are linear combinations of the first and second moments of the sparsified gradients, respectively. Assuming that the total changes in the sparsity of $\mG_t$ over all $t$ can be bounded by some constant $\Tilde{s} \ll d$, we can use the OMP algorithm as in \ref{OMP} to almost accurately recover the original $d$-dimensional representations of $\mM_t$ and $\mV_t$. After applying OMP to $\mM_t$ and $\mV_t$ separately, we obtain $\mN_t$ as follows:
\begin{equation}
    \mN_t = \alpha \frac{\text{OMP}_{\mA}(\mM_t)}{\sqrt{\text{OMP}_{\mA}(\mV_t)} + \epsilon},
\label{n_recovery}
\end{equation}
where $\alpha$ is a scaling factor. We note that the feasibility of obtaining $\mN_t$, as in \eqref{n_recovery}, is ensured by the fact that $\Tilde{\mG}_t$ and $\Tilde{\mG}^2_t$, and thus $\mM_t$ and $\mV_t$, share the same sparsity pattern. Consequently, the indices of the non-zero entries in $\text{OMP}_{\mA}(\mM_t)$ and $\text{OMP}_{\mA}(\mV_t)$ are identical. Furthermore, the sparsity level $s$ provides a tradeoff between performance and efficiency. Clearly, a larger $s$ leads to better performance since $\Tilde{\mG}_t$ provides a better estimate for $\mG_t$; however, it increases the computational overhead with the OMP algorithm in recovering an $s$-sparse vector.

Following compression, the optimizer states $\mM_t$ and $\mV_t$ are now $k$-dimensional vectors. Setting $k = \kappa s$ leads to a reasonable recovery of $\sum_{i=1}^t h_i(\beta_{1}) \Tilde{\mG_i}$ and $ \sum_{i=1}^t h_i(\beta_{2}) \Tilde{\mG^2_i}$ from $\mM_t$ and $\mV_t$ in \ref{mv_decomposed}, using OMP. Now, the size of the optimizer states in AdamW becomes purely a function of $k$, and can be controlled at a granular level.

We refer to our proposed method as SGC, which uses the AdamW optimizer and is presented in Algorithm \ref{CSADAM}. For ease of presentation, we represent this algorithm with $\mN_t$ = SGC($\mG_t$), which takes the gradient vector $\mG_t \in \R^d$ as the input and outputs $\mN_t \in \R^d$, while the optimizer states $\mM_t$ and $\mV_t$ are $k$-dimensional. Incorporating this into our formulation in \eqref{fullupdates} yields:
\begin{equation}
    \mW^{(1)} = \mW^{(0)} - \eta \sum_{t} \text{SGC}(\mG_t).
\end{equation}

\begingroup
\begin{algorithm}[t]
\caption{SGC at timestep \textit{t}}
\label{CSADAM}
\begin{algorithmic}[1]
\Require $\mG_t, \mA, s, \beta_1, \beta_2, \epsilon$
\State $\bm{p}_t \gets \bm{A} \text{ Sparsify}_s (\bm{G}_t)$, \quad $\bm{q}_t \gets \bm{A} \text{ Sparsify}_s (\bm{G}_t^2)$ 
\State $\mM_t \gets \beta_1 \mM_{t-1} + (1 - \beta_1) \bm{p}_t$
\State $\mV_t \gets \beta_2 \mV_{t-1} + (1 - \beta_2) \bm{q}_t$
\State $\mM_t \gets \frac{\mM_t}{1 - \beta_1^t}$
\State $\mV_t \gets \frac{\mV_t}{1 - \beta_2^t}$
\State $\mN_t \gets \alpha \frac{\text{OMP}_{\bm{A}}(\mM_t)}{\sqrt{\text{OMP}_{\bm{A}}(\mV_t)} + \epsilon}$
\State \Return $\mN_t$
\end{algorithmic}
\end{algorithm}
\endgroup

\subsection{Efficient SGC}

Here, we propose two efficient alternatives of the SGC algorithm.

\textbf{Memory Efficient SGC (MESGC)}.
Based on our observations, size of the projection matrix $\mA \in \R^{k \times d}$ may significantly contribute to the computation overhead. Although it is initialized only once before fine-tuning, the memory requirements can become substantial depending on the value of $s$, the sparsity level of $\Tilde{\mG}_t$, particularly when applying the OMP algorithm. To address this issue, we introduce the idea of chunking the gradient signals prior to applying a projection matrix. Specifically, we split $\mG_t$ into $c$ equal sized chunks before sparsifying and projecting each chunk. This enables the projection matrix $\mA$ to be much smaller in size from $k \times d$ to $(k \times d) / c$. We split $\mG_t$ to $c$ equal-size chunks $\mG_t = \left[\mG^1_t, \dots, \mG^c_t \right]$ and apply the SGC algorithm to each $\mG^i_t$. Accordingly, we have $\mN^i_t = \text{SGC}(\mG^i_t) \in \R^{\frac{d}{c}}$, and we concatenate all these outputs to obtain $\mN_t$ as $\mN_t=\left[\mN^1_t, \dots, \mN^c_t \right]$. We select $s_c = s / c$ non-zero elements per chunk to ensure $s$ non-zero entries overall. Since the projection matrix $\mA$ is the same for each chunk, we obtain efficiency by a factor of $c$ for storing $\mA$. However, we may not achieve an exact estimate of $\Tilde{\mG}_t$ and $\Tilde{\mG}^2_t$ when sparsifying and concatenating $\mG^i_t$'s because the sparsity pattern in $\mG_t$ is not truly uniform. This performance loss would be more severe with increasing $c$, while it enhances efficiency by reducing the dimension of the projection matrix $\mA$. We note that the chunking technique introduces more flexibility with the proposed SGC approach in realizing a more diverse spectrum of performance-efficiency tradeoff. 

\textbf{Compute Efficient SGC (CESGC)}.
The main tradeoff for our memory efficient approach is increased runtime attributed to OMP, which scales with $d$, the size of gradients $\mG_t$. Here, we present a computationally efficient alternative at the expense of slightly increased memory usage. For ease of presentation here, consider $\mG_t \in \R^{m \times n}$ to be in a matrix form. The main idea is to perform double compression, where we first compress $\mG_t$ once using a projection matrix $\mB_t \in \R^{r \times m}$, and then apply SGC to this compressed gradient of dimension $(r \times n) \ll d$, therefore reducing time complexity. The intuition behind this approach is that the resultant vector after the first compression is still quasi-sparse. The projection matrix $\mB_t$ should be selected such that as much information is retained after projection. For this purpose, we use the fact that SGC is orthogonal to many other approaches. Thus, we apply one of these methods, GaLore, to obtain $\mB_t$, which reduces the dimension of the vector entering the SGC algorithm. Specifically, we initialize the projection matrix $\mB_t$ every fixed number of iterations by applying truncated SVD on $\mG_t$:
\begin{equation*}
    \mU, \Lambda, \mV = \text{SVD}(\mG_t), \quad \mB_t = \mU[:,:r] \in \R^{r \times m},
\end{equation*}
where $\mB_t$ is set to be the first $r$ columns of the left-singular vectors of SVD of $\mG_t$. We then project the gradients $\mG_t$ using $\mB_t$ and apply SGC to the resultant vector, i.e., SGC($\mB_t \mG_t$). Finally, we project back the resultant updates from SGC($\mB_t \mG_t$) onto the original $d$-dimensional space using $\mB_t^T$ to update the parameters in $\mW$. Incorporating this into our formulation in \eqref{fullupdates} yields:
\begin{equation}
    \mW^{(1)} = \mW^{(0)} - \eta \sum_{t} \mB^T_t\text{SGC}(\mB_t\mG_t).
\end{equation}
We note that the dimension of the vector entering SGC is $r \times n$ rather than $d$, thus improving the compute efficiency with OMP. CESGC can be combined with our memory efficient implementation, where chunking is performed after the projection of $\mG_t$, and we assume this is performed by default for experiments using CESGC. In Appendix \ref{appendix_sgc}, we discuss some further extensions of SGC.

\subsection{Memory Analysis}
Here, we analyze the memory requirements of our efficient SGC implementations and compare it with popular gradient compression and PEFT methods, specifically GaLore and LoRA. The memory requirements of our approach, Galore, and LoRA to perform weight updates for a single vector are shown in Table \ref{compare_table}. Observe that the number of optimizer states in both Galore and LoRA are a function of $d$. On the other hand, the size of optimizer states for our memory efficient approach is independent of the weight dimensions, and only depends on $k=\kappa c s_c$, where $s_c$ is sparsity per chunk, $c$ is the number of chunks, and the constant $\kappa$ is to satisfy the RIP conditions for the OMP algorithm. This enables SGC to be significantly more memory efficient in the optimizer states.
\begin{table}[t]
\centering
\caption{Comparison between our approach, GaLore, and LoRA for storing the trainable parameters during fine-tuning with AdamW. For simplicity, assume weight dimensions $d$ can be reshaped to $2$D matrix of size $\sqrt{d}\times\sqrt{d}$, $r \ll d$ is the chosen rank, $k \ll d$ is the dimension we want to compress each optimizer state to. The projection matrices refer to the costs of storing $\mB_t$ during fine-tuning.}
\begin{tabular}{lcccc}
\toprule
& \textbf{MESGC} & \textbf{CESGC} & \textbf{GaLore} & \textbf{LoRA} \\
\midrule
Weights & $d$ & $d$ & $d$ & $d + 2r\sqrt{d}$ \\
Optimizer States & $2 k$ & $2 k$ & $2r\sqrt{d}$ & $4r\sqrt{d}$ \\
Projection Matrices & - & $r\sqrt{d}$ & $r\sqrt{d}$ & - \\
\bottomrule
\end{tabular}
\label{compare_table}
\end{table}

\subsection{Convergence Analysis}
Following \citet{stich2018sparsifiedsgdmemory}, it is possible to show that top-$ k $ sparsification leads to convergence at the same rate as vanilla SGD. The key difference in our algorithm is the use of chunking and sparsification applied to every chunk. Thus, the proof of convergence boils down to bounding the distance between the sparse form of gradient vector $\mG$ and the sparse form of every sub-vector after chunking the gradient vector $\mG$. 
\begin{definition}[Chunk-based $s$-sparsification]
\label{def:chunk_sparsification}
Let $\mG \in \mathbb{R}^d$ be a gradient vector, partitioned into $c$ equally sized chunks:
\[
\mG 
= \bigl[\mG^1, \dots, \mG^c\bigr], 
\quad 
\mG^i \in \mathbb{R}^{\frac{d}{c}}, \quad i=1,\dots,c.
\]
We define the \emph{chunk-based $s$-sparsified} vector $\tilde{\mG}'$ by applying an $s_c$-sparsification to each chunk, where $s = \sum_{i=1}^c s_c$. Concretely,
\[
\tilde{\mG}' 
= \bigl[\tilde{\mG}^1, \dots, \tilde{\mG}^c\bigr],
\quad
\tilde{\mG}^i 
= \mathrm{Sparsify}_{s_c}\bigl(\mG^i\bigr).
\]
That is, within each chunk $\mG_i$, we keep exactly the top $s_c$ magnitude entries and set the rest to zero.

Separately, we define the \emph{global $s$-sparsified} vector
\[
\tilde{\mG} = \mathrm{Sparsify}_{s}\bigl(\mG\bigr),
\]
which keeps the top-$s$ entries from the entire vector $\mG$ rather than chunk-by-chunk.
\end{definition}

\begin{theorem}[Worst-case bound on chunk-based vs.\ global sparsification]
\label{thm:worst_case_bound}
Let $\mG, \tilde{\mG}'$ and $\tilde{\mG}$ be as in Definition~\ref{def:chunk_sparsification}.
Then, it holds that 
\[
\mathbb{E}\bigl[\|\tilde{\mG}' - \tilde{\mG}\|_2^2\bigr]
\;\le\;
2\Bigl(1 - \frac{s}{d}\Bigr)\,G_{\text{max}},
\]
where $G_{\text{max}}$ is an upper bound on $\mathbb{E}\bigl[\|\tilde{\mG}'\|_2^2\bigr]$.
\end{theorem}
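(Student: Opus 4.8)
The plan is to never compare $\tilde{\mG}'$ and $\tilde{\mG}$ to each other directly, but to route both through the common reference vector $\mG$, exploiting that $\tilde{\mG}=\mathrm{Sparsify}_s(\mG)$ is the \emph{optimal} $s$-sparse approximation of $\mG$ whereas $\tilde{\mG}'$ is merely \emph{an} $s$-sparse approximation. Fix a realization of $\mG$ and write $S=\mathrm{supp}(\tilde{\mG})$, $S'=\mathrm{supp}(\tilde{\mG}')$, so $|S|=s$ and $|S'|=\sum_{i=1}^c s_c=s$ (the chunk supports are disjoint). Since $\tilde{\mG}$ and $\tilde{\mG}'$ each agree with $\mG$ on their supports and vanish off them, the difference $\tilde{\mG}'-\tilde{\mG}$ is supported on the symmetric difference $S\triangle S'$, with entry $\pm\mG_j$ there, giving the exact identity
\[
\|\tilde{\mG}'-\tilde{\mG}\|_2^2=\sum_{j\in S'\setminus S}\mG_j^2+\sum_{j\in S\setminus S'}\mG_j^2 .
\]

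Next I would bound each sum by a residual norm. Since $S'\setminus S$ lies in the complement of $S$, the first sum is at most $\sum_{j\notin S}\mG_j^2=\|\mG-\tilde{\mG}\|_2^2$, and likewise the second sum is at most $\sum_{j\notin S'}\mG_j^2=\|\mG-\tilde{\mG}'\|_2^2$. Optimality of the global top-$s$ operator gives $\|\mG-\tilde{\mG}\|_2^2\le\|\mG-\tilde{\mG}'\|_2^2$ (as $\tilde{\mG}'$ is $s$-sparse), so altogether $\|\tilde{\mG}'-\tilde{\mG}\|_2^2\le 2\,\|\mG-\tilde{\mG}'\|_2^2$.

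It then remains to control $\|\mG-\tilde{\mG}'\|_2^2$ via the per-chunk top-$k$ contraction of \citet{stich2018sparsifiedsgdmemory}. With $b:=d/c$ and $s_c=s/c$, for each chunk $i$ the entries discarded by $\mathrm{Sparsify}_{s_c}$ are the $b-s_c$ smallest of $\mG^i$ in magnitude, so their squared sum is at most $\tfrac{b-s_c}{b}\|\mG^i\|_2^2=(1-\tfrac{s}{d})\|\mG^i\|_2^2$ (the mean of the $b-s_c$ smallest squared entries is at most the mean of all $b$ of them). Summing over $i=1,\dots,c$ yields $\|\mG-\tilde{\mG}'\|_2^2\le(1-\tfrac{s}{d})\|\mG\|_2^2$, hence the pointwise bound $\|\tilde{\mG}'-\tilde{\mG}\|_2^2\le 2(1-\tfrac{s}{d})\|\mG\|_2^2$. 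Taking expectations and bounding the second moment of the stochastic gradient by $G_{\max}$ finishes the proof.

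I expect two points to require care. The first is the constant $2$: a plain triangle inequality $\|\tilde{\mG}'-\tilde{\mG}\|_2^2\le 2\|\tilde{\mG}'-\mG\|_2^2+2\|\mG-\tilde{\mG}\|_2^2$ only delivers the constant $4$, so one genuinely has to use the support/symmetric-difference identity above (together with optimality of $\tilde{\mG}$) to recover $2$. The second and more substantive point is the role of $G_{\max}$: the quantity that naturally emerges on the right is $\mathbb{E}\|\mG\|_2^2$, not $\mathbb{E}\|\tilde{\mG}'\|_2^2$, and these cannot be interchanged in general — when one chunk carries most of the gradient mass, $\|\tilde{\mG}'\|_2^2$ can be much smaller than $\|\mG\|_2^2$ while $\|\tilde{\mG}'-\tilde{\mG}\|_2^2$ stays comparable to $\|\mG\|_2^2$ — so $G_{\max}$ should be read as an upper bound on the (un-sparsified) gradient's second moment, i.e.\ the bounded-gradient assumption standard in this convergence analysis. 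A minor loose end is tie-breaking in $\mathrm{Sparsify}$: fixing any deterministic rule is harmless, since both the optimality of $\tilde{\mG}$ and the per-chunk averaging bound hold for every valid top-$k$ selection.
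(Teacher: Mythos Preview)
Your argument is correct and takes a genuinely different route from the paper. The paper argues by fixing a specific ``worst-case'' configuration in which the support of the global top-$s$ vector $\tilde{\mG}$ is contiguous (spanning $l=\lceil sc/d\rceil$ chunks), then splits the error into (i) entries of $\tilde{\mG}$ missed by the chunk-wise selection in those $l$ chunks and (ii) spurious entries picked up in the remaining $c-l$ chunks; each part is bounded by a count of mismatched indices times the average per-entry energy $\mathbb{E}[\|\tilde{\mG}'\|_2^2]/s$, and the two counts combine to $2(s-ls_c)\approx 2s(1-s/d)$. Your approach instead works pointwise for every realization: the symmetric-difference identity plus optimality of the global operator gives $\|\tilde{\mG}'-\tilde{\mG}\|_2^2\le 2\|\mG-\tilde{\mG}'\|_2^2$, and the Stich--style per-chunk contraction (which holds chunk-by-chunk with the \emph{same} factor $1-s/d$ because $s_c/(d/c)=s/d$) finishes it. Your route is cleaner and avoids the implicit uniformity assumption behind the paper's $\|\tilde{\mG}'\|_2^2/s$ per-entry bound; the paper's route, on the other hand, ties the bound directly to $\|\tilde{\mG}'\|_2^2$ rather than $\|\mG\|_2^2$, which is how it matches the stated definition of $G_{\max}$.

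Your caveat about $G_{\max}$ is well taken: what falls out of your argument is $2(1-s/d)\,\mathbb{E}\|\mG\|_2^2$, and since $\|\tilde{\mG}'\|_2^2\le\|\mG\|_2^2$, an upper bound on $\mathbb{E}\|\tilde{\mG}'\|_2^2$ need not bound $\mathbb{E}\|\mG\|_2^2$. Reading $G_{\max}$ as the standard bounded-gradient constant (a bound on $\mathbb{E}\|\mG\|_2^2$) resolves this; the paper's own derivation sidesteps the issue only by invoking the heuristic per-entry bound, so your reading is the more defensible one.
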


\begin{proof}

The worst-case scenario corresponds to when all $s$ non-zero entries of $\tilde{\mG}'$ are contiguous, and without loss of generality, located in indices $1$ to $s$. Let $l = \bigl\lceil \tfrac{s}{d/c} \bigr\rceil$ be the number of chunks spanning these $s$ non-zero indices of $\tilde{\mG}'$. Decompose the total error:
\[
D_1 
= \mathbb{E}\Bigl[\sum_{i=1}^l \|\tilde{\mG}'^i - \tilde{\mG}^i\|_2^2\Bigr]
\quad \text{and} \quad
D_2
= \mathbb{E}\Bigl[\sum_{i=l+1}^c \|\tilde{\mG}'^i - \tilde{\mG}^i\|_2^2\Bigr].
\]
Intuitively, $D_1$ captures missing entries in the first $l$ chunks not selected by $\tilde{\mG}'$, while $D_2$ captures ``extra'' entries in the other $c - l$ chunks that are selected but should be zero.

By bounding each term via
\[
D_1 \,\le\, (s - l s_c)\,\mathbb{E}\Bigl[\tfrac{\|\tilde{\mG}'\|_2^2}{s}\Bigr]
\quad \text{and} \quad
D_2 \,\le\, (c - l)\,s_c\,\mathbb{E}\Bigl[\tfrac{\|\tilde{\mG}'\|_2^2}{s}\Bigr],
\]
we obtain
\[
\mathbb{E}\bigl[\|\tilde{\mG}' - \tilde{\mG}\|_2^2\bigr]
\;=\;
D_1 + D_2
\;\le\;
2\Bigl(1 - \tfrac{s}{d}\Bigr)\,\mathbb{E}\bigl[\|\tilde{\mG}'\|_2^2\bigr]
\;\le\;
2\Bigl(1 - \tfrac{s}{d}\Bigr)\,G_{\text{max}},
\]
which completes the proof.
\end{proof}
We note that for the uniform case where the non-zero entries of $\tilde{\mG}'$ are uniformly distributed among the $d$ indices, each chunk $\mG_i$ is likely to contain about $s_c$ of those entries. Thus, $\tilde{\mG}' \approx \tilde{\mG}$ in expectation, and
\[
\mathbb{E}\bigl[\|\tilde{\mG}' - \tilde{\mG}\|_2^2\bigr] = 0.
\]
Using these results, it is possible to formulate the theoretical conditions for convergence, and we leave this as part of future work.

\section{Experiments}
We evaluate our approach on fine-tuning various large languages models, specifically on LLaMA2-7B, LLaMA3-8B, and LLaMA2-13B, and Mistral-7B. The results are compared with full fine-tuning, LoRA, and GaLore as baseline for all the setups. In addition, we demonstrate how our approach performs well in both small dataset and optimizer state sizes. The results show that SGC enables more granular control over the number of optimizer states and achieves comparable or better accuracy to baseline approaches while using a significantly smaller number of optimizer states.

\subsection{Commonsense and Knowledge Evaluation}
We evaluate LLaMA2-7B, LLaMA3-8B, and LLaMA2-13B on a set of commonsense reasoning tasks to demonstrate CESGC's effectiveness in fine-tuning. Commonsense reasoning tasks involve 8 subtasks and we follow \citet{hu2023llmadaptersadapterfamilyparameterefficient} to combine the training sets into a single dataset and evaluate on each of the individual tasks separately. Details of hyperparameters and training settings can be found in Appendix \ref{commonsense_details}. Results from Table \ref{commonsense} show that our approach achieves a comparable average accuracy compared to both GaLore and LoRA, while using a smaller number of optimizer state parameters. Notably, in the LLaMA3-8B model, CESGC performs the best, achieving a superior accuracy of $1\%$ over LoRA, while using less than half the number of optimizer state parameters. To further demonstrate the consistency of our approach, we fine-tune Mistral-7B on a subset of the cleaned Alpaca dataset \citet{alpaca}, and evaluate its performance on the MMLU benchmark (details can be found in Appendix \ref{knowledge_evaluation}). These results indicate that our approach achieves competitive performance across different model types and tasks.

\begin{table}[t]
\caption{LLaMA2-7B, LLaMA3-8B, and LLaMA2-13B on fine-tuning eight commonsense benchmarks (5 shots) using various PEFT methods. Average accuracy is reported in the final column. Note that \# Params refers to percentage of optimizer states, $\mM_t$ and $\mV_t$, relative to full fine-tuning.}
\begin{adjustbox}{max width=\textwidth}
\centering
\begin{tabular}{c|c|c|c|c|c|c|c|c|c|c|c}
\toprule
Model & Method & \# Params (\%) & \textbf{ARC-e} & \textbf{ARC-c} & \textbf{BoolQ} & \textbf{HellaSwag} & \textbf{OBQA} & \textbf{PIQA} & \textbf{SIQA} & \textbf{WinoGrande} & \textbf{Average} \\
\midrule
\multirow{4}{*}{LLaMA2-7B} & Full Fine-tuning & 100 & 82.5 & 55.4 & 83.8 & 77.8 & 45.8 & 80.1 & 55.4 & 77.8 & 69.8\\
& CESGC & 0.08 & 82.9 & 53.9 & 82.9 & 77.5 & 44.8 & 79.9 & 54.2 & 74.5 & 68.7\\
& GaLore & 0.10 & 82.3 & 54.1 & 81.7 & 78.2 & 45.8 & 80.6 & 53.5 & 75.3 & \textbf{68.9}\\
& LoRA & 0.20 & 82.1 & 53.2 & 84.3 & 76.2 & 44.0 & 80.4 & 54.0 & 76.5 & 68.8\\
\midrule
\multirow{4}{*}{LLaMA3-8B} & Full Fine-tuning & 100 & 85.8 & 62.5 & 86.6 & 81.2 & 51.4 & 82.3 & 59.5 & 81.9 & 73.9\\
& CESGC & 0.08 & 83.9 & 57.8 & 85.2 & 81.0 & 46.2 & 82.0 & 53.4 & 77.8 & \textbf{70.9}\\
& GaLore & 0.10 & 84.3 & 57.2 & 82.6 & 81.2 & 46.2 & 82.3 & 52.9 & 78.0 & 70.6\\
& LoRA & 0.20 & 82.3 & 56.2 & 83.8 & 79.5 & 48.0 & 81.7 & 52.8 & 74.4 & 69.9\\
\midrule
\multirow{4}{*}{LLaMA2-13B} & Full Fine-tuning & 100 & 86.2 & 60.9 & 87.4 & 81.0 & 51.8 & 82.0 & 60.3 & 82.9 & 74.1\\
& CESGC & 0.07 & 84.1 & 57.2 & 85.3 & 80.0 & 49.4 & 82.0 & 54.6 & 78.6 & 71.4\\
& GaLore & 0.08 & 83.8 & 56.2 & 85.3 & 81.2 & 47.4 & 81.7 & 55.5 & 79.0 & 71.3\\
& LoRA & 0.16 & 83.4 & 57.1 & 86.3 & 81.3 & 48.0 & 81.7 & 56.5 & 79.6 & \textbf{71.7}\\
\bottomrule
\end{tabular}
\end{adjustbox}
\label{commonsense}
\end{table}

\begin{table}[t]
\centering
\caption{Mistral-7B performance on the MMLU evaluation across various domains using different PEFT methods. Average accuracy is reported in the final column.}
\label{tab:mistral7b_mmlu}
\begin{adjustbox}{max width=\textwidth}
\begin{tabular}{c|c|c|c|c|c}
\toprule
Method & \textbf{STEM} & \textbf{Social Science} & \textbf{Humanities} & \textbf{Other} & \textbf{Average} \\
\midrule
CESGC & 52.3 & 72.6 & 56.0 & 69.2 & \textbf{61.9} \\
GaLore & 52.3 & 72.6 & 56.0 & 69.0 & 61.8 \\
LoRA   & 52.1 & 72.8 & 55.9 & 68.9 & 61.8 \\
\bottomrule
\end{tabular}
\end{adjustbox}
\end{table}

\subsection{Memory Efficiency and Throughput}
Consider $r=1$, the minimum rank used for GaLore and LoRA. Based on Table \ref{compare_table}, we can calculate that GaLore and LoRA require $8192$ and $16384$ optimizer states, respectively. With $s_c=1$, $c=64$, and $\kappa=7$, MESGC requires only $896$ optimizer states, reducing the number of parameters by around $10$ times. To demonstrate how MESGC performs using a significantly lower number of optimizer states, we fine-tune LLaMA2-7B on a subset of the commonsense reasoning dataset, setting $k=2048$ (see Appendix \ref{appendix_memeff} for details). Table \ref{table:minsuperior} shows that MESGC achieves $0.6\%$ higher average accuracy than GaLore when fine-tuning LLaMA2-7B on commonsense reasoning while using only half the number of optimizer states. We also measure the throughput using wall clock time per iteration with the same fine-tuning task and compare our approaches with other methods (see Table \ref{table:superior}). In particular, MESGC introduces some additional latency, but CESGC is optimized to be competitive with the baseline approaches.

\subsection{Small Datasets and Small Optimizer States}
In this section, we analyze our approach in extreme scenarios, namely cases of extremely small datasets and optimizer states. To evaluate our approach's effectiveness on small datasets, we focus on fine-tuning LLaMA2-7B on subsets of the BoolQ \citep{clark2019boolqexploringsurprisingdifficulty} dataset while using a minimal number of optimizer states. Specifically, we split the full dataset into multiple subsets ranging from $500$ to $2000$ samples, and use an equal number of optimizer states across all methods (further details can be found in Appendix \ref{appendix_small}). From Figure \ref{fig:smallfigure}, it can be seen that CESGC performs strictly better using small dataset sizes. We observe that this may be task dependent, but for tasks such as BoolQ that rely on leveraging the pre-trained knowledge about facts and entities, our approach can provide a more targeted method for fine-tuning by greedily adjusting based on largest gradient magnitudes. On the other hand, LoRA at the lowest rank ($r=1$) struggles to learn under the limited dataset scenario, while GaLore with $r=1$ underperforms CESGC.

By being independent of hidden dimension size, our approach enables fine-tuning using a smaller number of optimizer states than possible compared to both GaLore and LoRA (see Figure \ref{fig:increment}). With $\kappa=8$ and $c=64$, we can increase $s_c$ by $1$ at each increment to obtain the plot for CESGC. The granularity for CESGC is $512$, which is significantly less than both GaLore ($8192$) and LoRA ($16384$). This enables a finer sweep in the number of optimizer states to search for best hyperparameters to use. For instance, as shown in the figure, CESGC achieves $80.2\%$ accuracy with using just over $6000$ optimizer states, whereas both GaLore and LoRA are unable to obtain results since it is below the minimum number of optimizer state parameters they can support.

\begin{table}[t]
    \centering
    \begin{minipage}[t]{0.48\linewidth} 
        \centering
        \caption{Comparison of wall clock time per iteration between methods.}
        \label{table:superior}
        \begin{tabular}{c c}
          \toprule
          Method & Time per iteration (s) \\
          \midrule
          Full Fine-tuning & 1.69 \\
          LoRA & 1.51 \\
          GaLore & 1.88 \\
          MESGC & 7.52 \\
          CESGC & 2.82 \\
          \bottomrule
        \end{tabular}
    \end{minipage}%
    \hfill
    \begin{minipage}[t]{0.48\linewidth} 
        \centering
        \caption{Fine-tuning results using a minimum number of optimizer states. MESGC conducted with $c=256$, $s_c=1$, $\kappa=8$, while both GaLore and LoRA use rank $r=1$.}
        \label{table:minsuperior}
        \begin{tabular}{c c c}
          \toprule
          Method & \# Params & Accuracy \\
          \midrule
          MESGC & 4096 & 68.0 \\
          GaLore & 8192 & 67.4 \\
          LoRA & 16384 & 67.7 \\
          \bottomrule
        \end{tabular}
    \end{minipage}
\end{table}

\begin{figure}[h!]
    \centering
    \subfigure[Small Dataset study]{
        \includegraphics[width=0.48\linewidth]{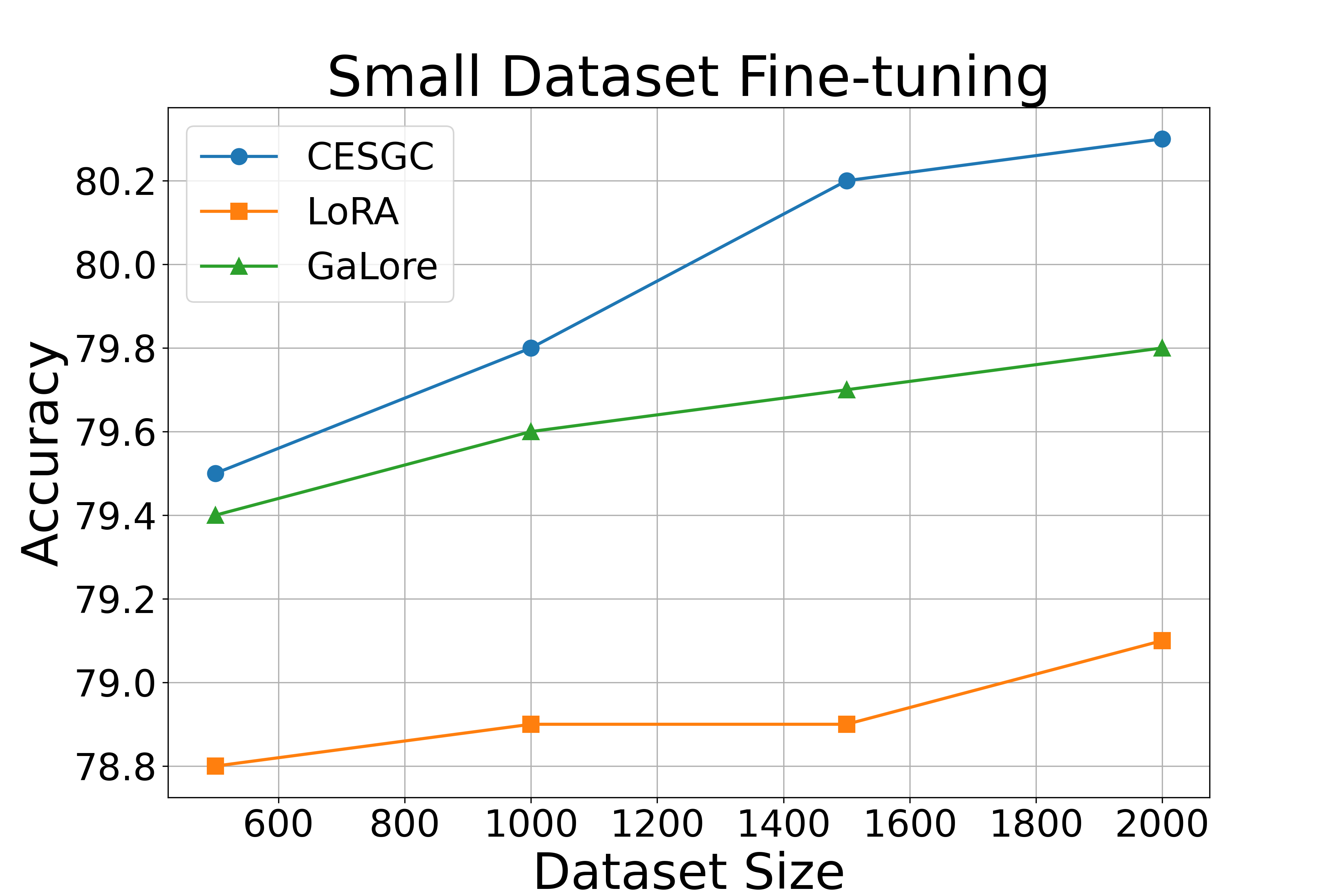}
        \label{fig:smallfigure}
    }
    \hfill
    \subfigure[Optimizer State study]{
        \includegraphics[width=0.48\linewidth]{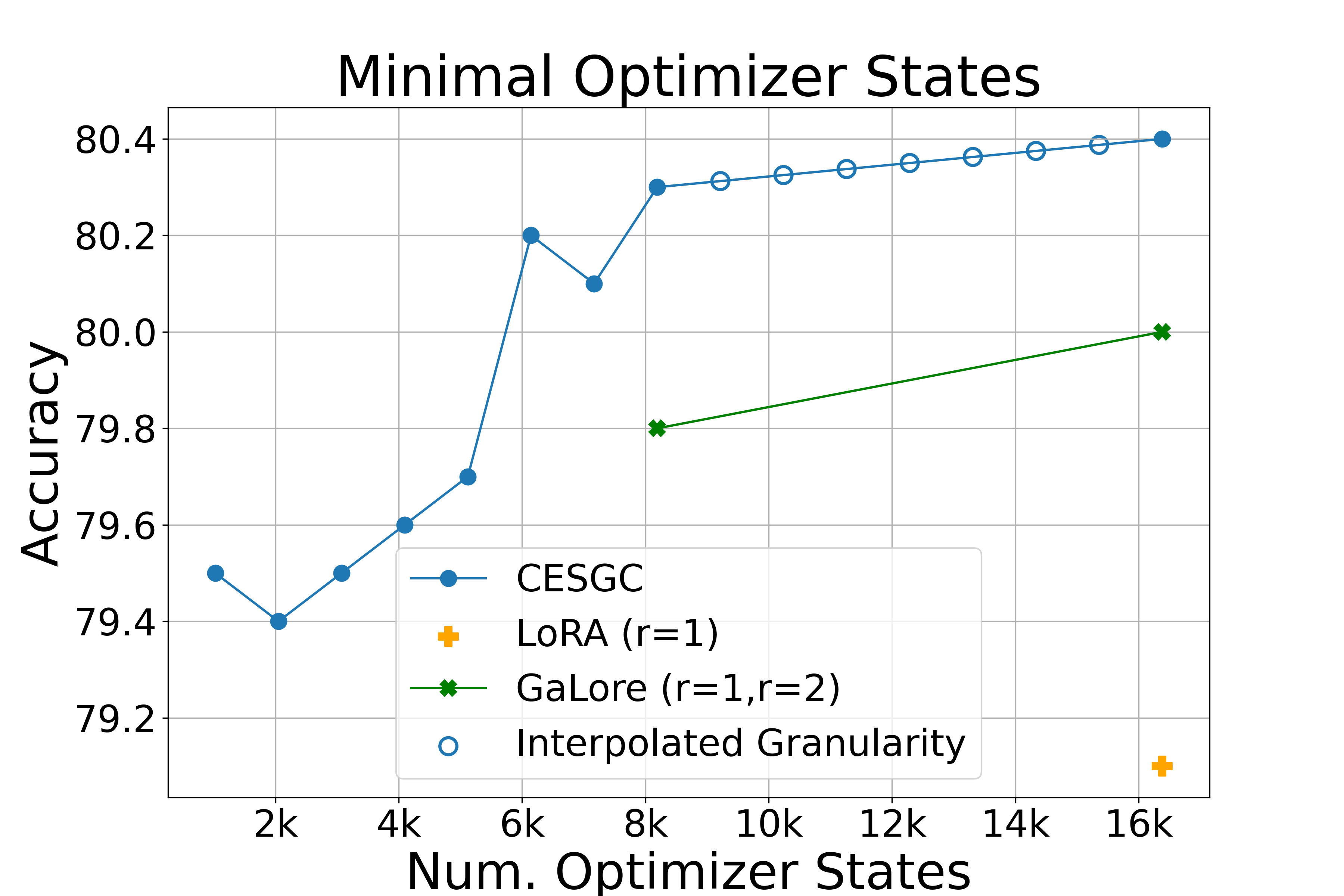}
        \label{fig:increment}
    }
    \caption{\textbf{(a)}. CESGC outperforms both GaLore and LoRA when fine-tuning with limited data on BoolQ. \textbf{(b)}. Plot showing improvement of accuracy of CESGC when using a minimal number of optimizer states. Hollow blue points are interpolated values that indicate the granularity of CESGC across optimizer states.}
\end{figure}

\subsection{Ablation Study}
Here, we investigate the effects of number of chunks $c$, total sparsity $s$, and the constant $\kappa$ on fine-tuning performance (details in Appendix \ref{appendix_ablation}). First, we set the total sparsity $s$, to be constant and vary $c$. Figure \ref{fig:ablationc} shows that increasing the number of chunks, while keeping the total $s$ constant decreases average accuracy across the commonsense reasoning evaluation. We attribute this to the uniform chunking, where the number of non-zero elements selected per chunk is $s_c =s / c$. However, in practice, the sparsity pattern of gradients may vary across the chunks, with certain parameter regions potentially requiring more attention than others. Therefore, we see higher accuracy corresponding to smaller chunk sizes. 

For sparsity, there is a general increasing trend, as seen in Figure \ref{fig:ablations}. As the number of non-zero elements selected increases, so does the number of optimizer states $k$, we expect the accuracy to improve until $s$ is equal to the number of parameters, as in full fine-tuning. We observe that increasing $s$ after a certain point results in diminished returns seeing as the slope is most steep when $s$ is increased initially and is less steep afterwards. This can be explained by how a small percentage of parameters account for the majority of the gradient norms during fine-tuning, which is supported by the observations in \citet{song2024sparsefinetuningpretrainedlarge}. 

Finally, we investigate the effect of $\kappa$, the constant to satisfy the RIP condition, with the goal of finding a lower bound such that performance is not negatively affected. Based on Figure \ref{fig:ablationk}, we see that if $\kappa$ is set to $6$, performance drops significantly. However, there is minimal gain from increasing $\kappa$ from $7$ to $8$, indicating a $\kappa$ value of $7$ should be sufficient.

\begin{figure}[t!]
    \centering
    \subfigure[Number of chunks study]{
        \includegraphics[width=0.31\linewidth]{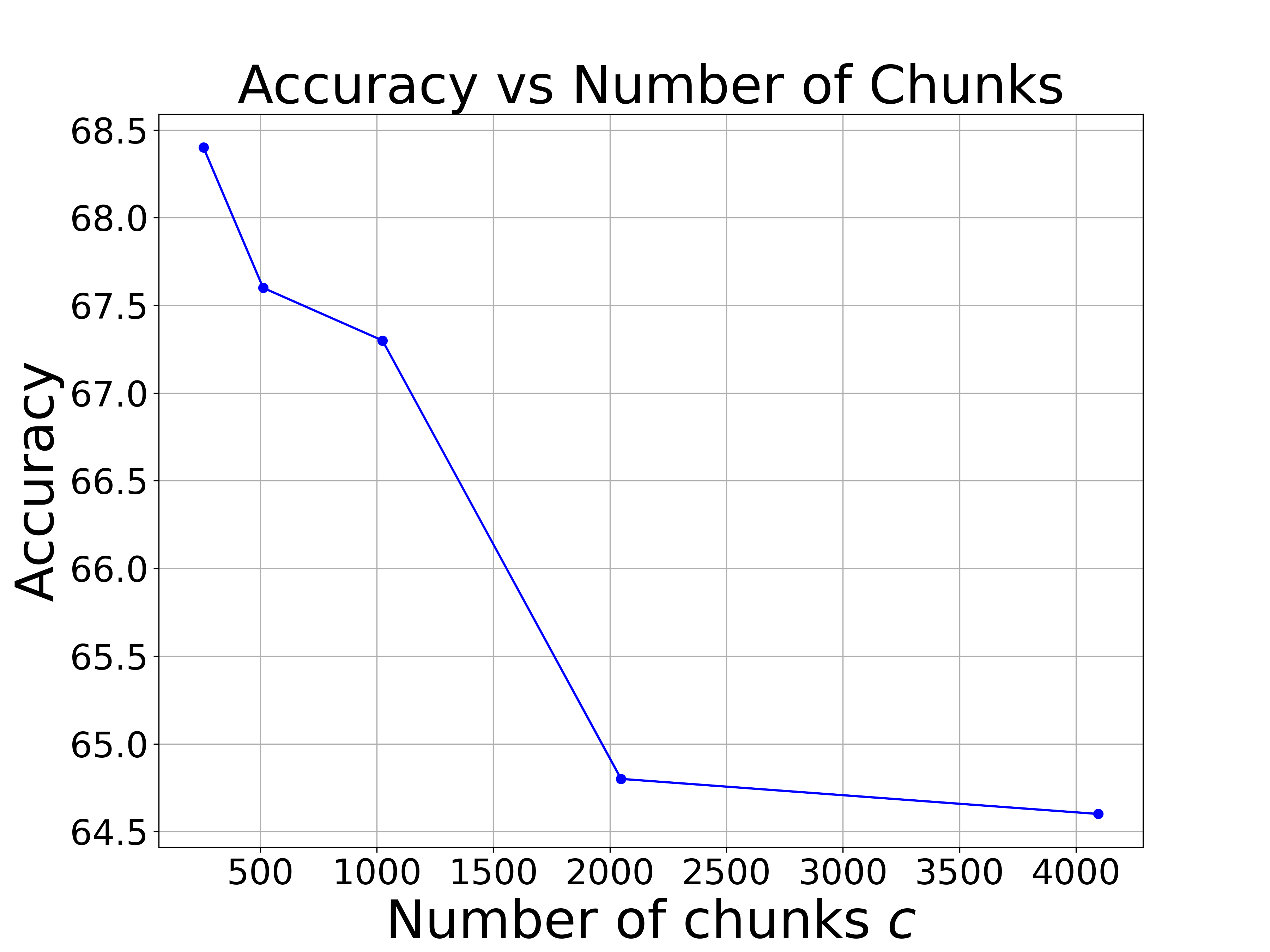}
        \label{fig:ablationc}
    }
    \hfill
    \subfigure[Sparsity study]{
        \includegraphics[width=0.31\linewidth]{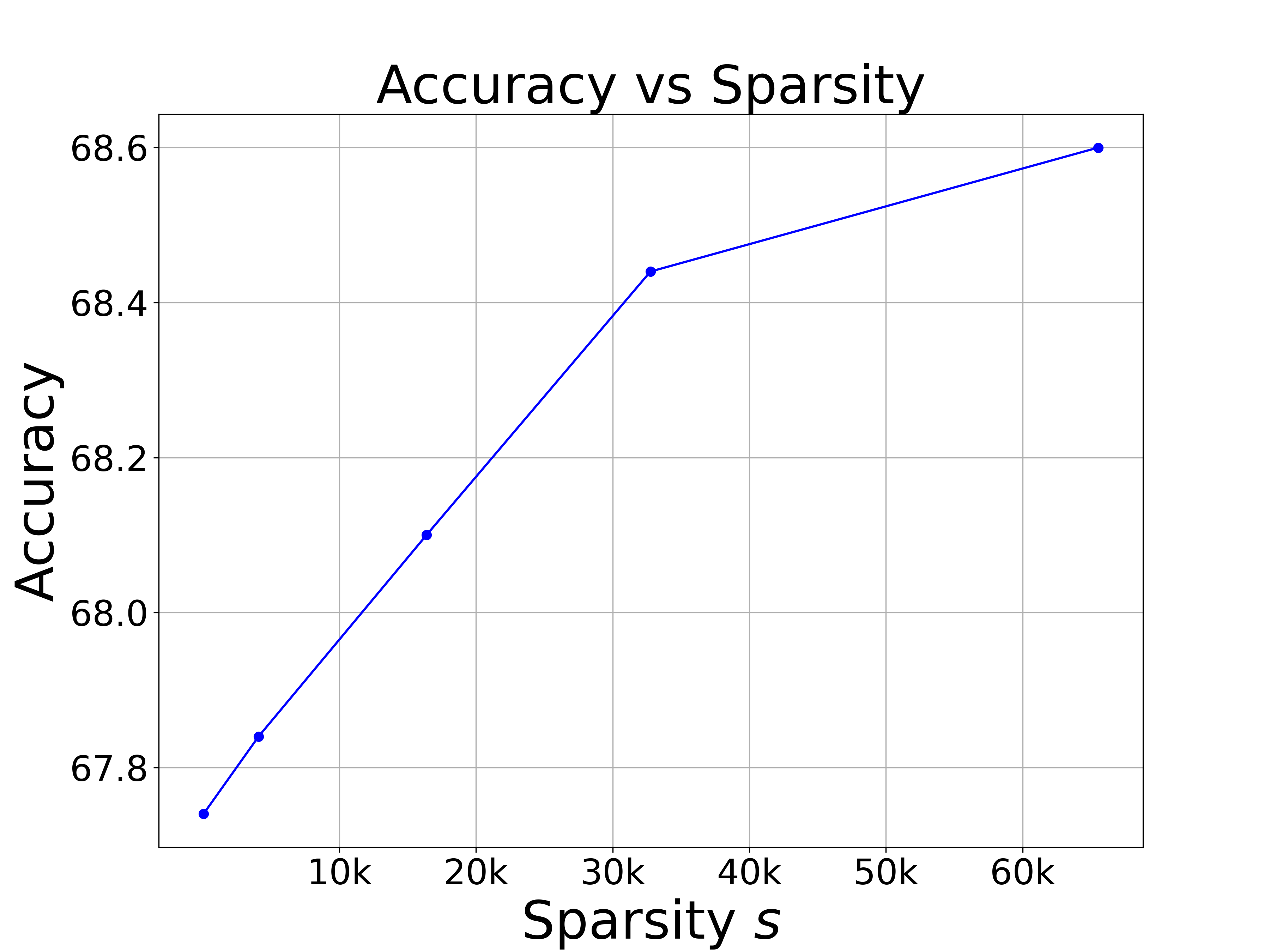}
        \label{fig:ablations}
    }
    \hfill
    \subfigure[$\kappa$ study]{
        \includegraphics[width=0.31\linewidth]{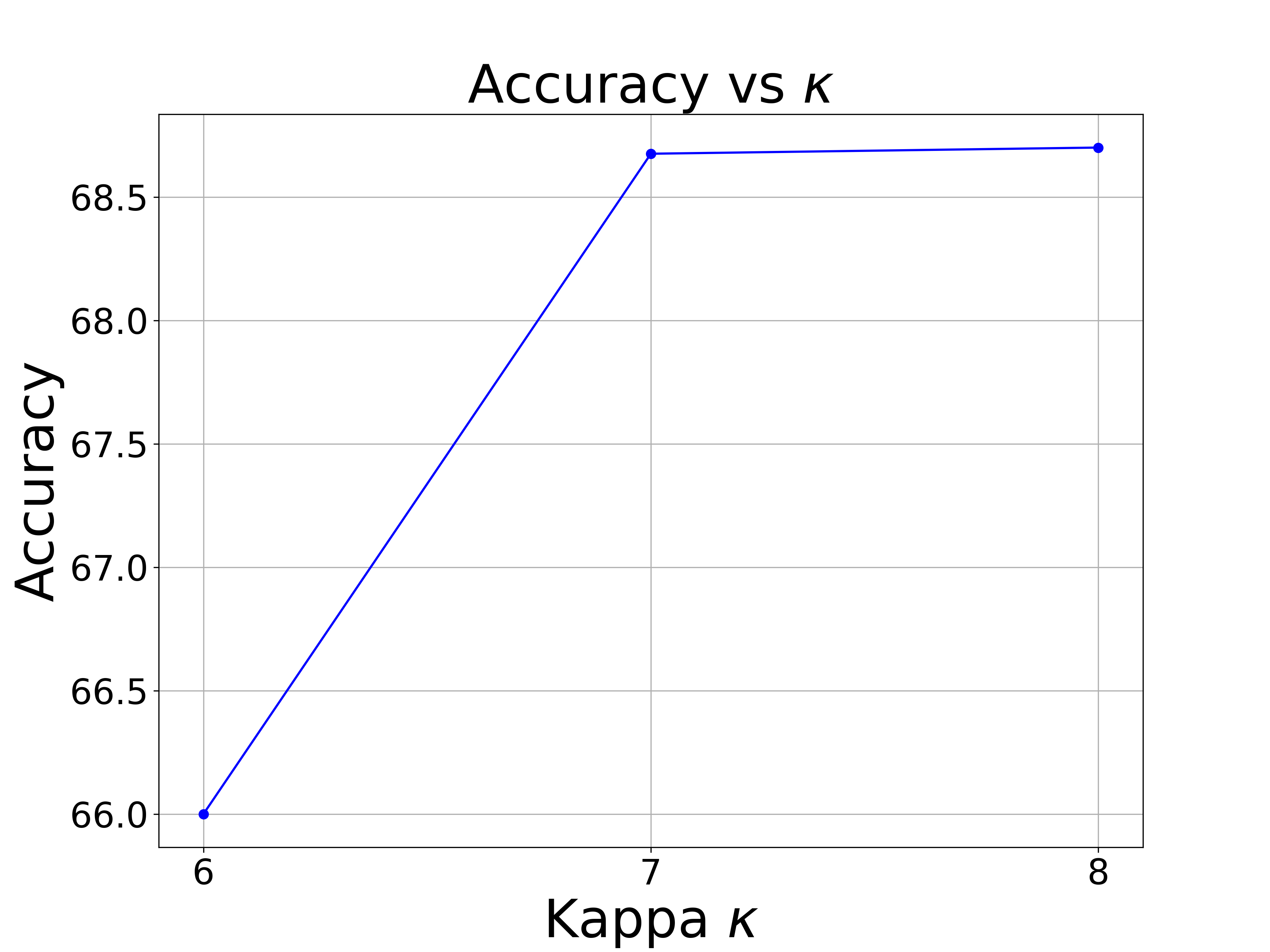}
        \label{fig:ablationk}
    }
    \caption{Ablation study for effects of number of chunks $c$, sparsity $s$, and constant $\kappa$. \textbf{(a)}. Average accuracy with varying $c$ and constant $s$. \textbf{(b)}. Average accuracy with varying $s$ and constant $c$. \textbf{(c)}. Average accuracy with varying $\kappa$.}
\end{figure}

\section{Conclusion}
In this work, we proposed a novel fine-tuning method, SGC, that enables flexible and granular control over the number of optimizer states. The key idea, leveraging the sparsity of the gradients, is to compress them through a linear projection onto a subspace of an arbitrary dimension $k$, which is independent of the original parameter dimensions. The updates are performed within this lower-dimensional subspace, and the results are projected back into the original $d$-dimensional space, effectively utilizing the gradient sparsity. This allows SGC to have significantly smaller and more granular number of parameters to train during fine-tuning compared to other PEFT approaches. We also provided two efficient implementations of SGC, MESGC and CESGC, and show through experiments that our approach can achieve comparable accuracy while being more memory efficient than other PEFT methods. Notably, we demonstrated that our approach achieves superior performance in data-limited settings, achieving higher accuracy than both LoRA and GaLore. Our approach is orthogonal to many gradient compression methods, opening opportunities for future work to integrate them and explore SGC's generalizability in domains like vision and audio.

\textbf{Acknowledgement}\\ This work was supported by IBM through the IBM-Rensselaer Future of Computing Research Collaboration.



\bibliography{main}

\begin{thebibliography}{42}
\providecommand{\natexlab}[1]{#1}
\providecommand{\url}[1]{\texttt{#1}}
\expandafter\ifx\csname urlstyle\endcsname\relax
  \providecommand{\doi}[1]{doi: #1}\else
  \providecommand{\doi}{doi: \begingroup \urlstyle{rm}\Url}\fi

\bibitem[Bai et~al.(2023)Bai, Bai, Chu, Cui, Dang, Deng, Fan, Ge, Han, Huang, and et. al]{bai2023qwentechnicalreport}
Jinze Bai, Shuai Bai, Yunfei Chu, Zeyu Cui, Kai Dang, Xiaodong Deng, Yang Fan, Wenbin Ge, Yu~Han, Fei Huang, and et. al.
\newblock Qwen technical report, 2023.
\newblock URL \url{https://arxiv.org/abs/2309.16609}.

\bibitem[Candes \& Tao(2005)Candes and Tao]{candes2005decodinglinearprogramming}
Emmanuel Candes and Terence Tao.
\newblock Decoding by linear programming, 2005.
\newblock URL \url{https://arxiv.org/abs/math/0502327}.

\bibitem[Candes et~al.(2004)Candes, Romberg, and Tao]{candes2004robustuncertaintyprinciplesexact}
Emmanuel Candes, Justin Romberg, and Terence Tao.
\newblock Robust uncertainty principles: Exact signal reconstruction from highly incomplete frequency information, 2004.
\newblock URL \url{https://arxiv.org/abs/math/0409186}.

\bibitem[Candes(2008)]{candes2008restricted}
Emmanuel~J Candes.
\newblock The restricted isometry property and its implications for compressed sensing.
\newblock \emph{Comptes rendus. Mathematique}, 346\penalty0 (9-10):\penalty0 589--592, 2008.

\bibitem[Chowdhery et~al.(2022)Chowdhery, Narang, Devlin, Bosma, Mishra, Roberts, Barham, Chung, Sutton, Gehrmann, and et. al]{chowdhery2022palmscalinglanguagemodeling}
Aakanksha Chowdhery, Sharan Narang, Jacob Devlin, Maarten Bosma, Gaurav Mishra, Adam Roberts, Paul Barham, Hyung~Won Chung, Charles Sutton, Sebastian Gehrmann, and et. al.
\newblock Palm: Scaling language modeling with pathways, 2022.
\newblock URL \url{https://arxiv.org/abs/2204.02311}.

\bibitem[Clark et~al.(2019)Clark, Lee, Chang, Kwiatkowski, Collins, and Toutanova]{clark2019boolqexploringsurprisingdifficulty}
Christopher Clark, Kenton Lee, Ming-Wei Chang, Tom Kwiatkowski, Michael Collins, and Kristina Toutanova.
\newblock Boolq: Exploring the surprising difficulty of natural yes/no questions, 2019.
\newblock URL \url{https://arxiv.org/abs/1905.10044}.

\bibitem[Dettmers et~al.(2023)Dettmers, Pagnoni, Holtzman, and Zettlemoyer]{dettmers2023qloraefficientfinetuningquantized}
Tim Dettmers, Artidoro Pagnoni, Ari Holtzman, and Luke Zettlemoyer.
\newblock Qlora: Efficient finetuning of quantized llms, 2023.
\newblock URL \url{https://arxiv.org/abs/2305.14314}.

\bibitem[Ding et~al.(2023)Ding, Qin, Yang, Wei, Yang, Su, Hu, Chen, Chan, Chen, et~al.]{ding2023parameter}
Ning Ding, Yujia Qin, Guang Yang, Fuchao Wei, Zonghan Yang, Yusheng Su, Shengding Hu, Yulin Chen, Chi-Min Chan, Weize Chen, et~al.
\newblock Parameter-efficient fine-tuning of large-scale pre-trained language models.
\newblock \emph{Nature Machine Intelligence}, 5\penalty0 (3):\penalty0 220--235, 2023.

\bibitem[Donoho(2006)]{donoho2006compressed}
David~L Donoho.
\newblock Compressed sensing.
\newblock \emph{IEEE Transactions on information theory}, 52\penalty0 (4):\penalty0 1289--1306, 2006.

\bibitem[Dubey et~al.(2024)Dubey, Jauhri, Pandey, Kadian, Al-Dahle, Letman, Mathur, Schelten, Yang, and et. al]{dubey2024llama3herdmodels}
Abhimanyu Dubey, Abhinav Jauhri, Abhinav Pandey, Abhishek Kadian, Ahmad Al-Dahle, Aiesha Letman, Akhil Mathur, Alan Schelten, Amy Yang, and et. al.
\newblock The llama 3 herd of models, 2024.
\newblock URL \url{https://arxiv.org/abs/2407.21783}.

\bibitem[Han et~al.(2024)Han, Gao, Liu, Zhang, and Zhang]{han2024parameterefficientfinetuninglargemodels}
Zeyu Han, Chao Gao, Jinyang Liu, Jeff Zhang, and Sai~Qian Zhang.
\newblock Parameter-efficient fine-tuning for large models: A comprehensive survey, 2024.
\newblock URL \url{https://arxiv.org/abs/2403.14608}.

\bibitem[Hao et~al.(2024)Hao, Cao, and Mou]{hao2024floralowrankadapterssecretly}
Yongchang Hao, Yanshuai Cao, and Lili Mou.
\newblock Flora: Low-rank adapters are secretly gradient compressors, 2024.
\newblock URL \url{https://arxiv.org/abs/2402.03293}.

\bibitem[Hayou et~al.(2024)Hayou, Ghosh, and Yu]{hayou2024loraefficientlowrank}
Soufiane Hayou, Nikhil Ghosh, and Bin Yu.
\newblock Lora+: Efficient low rank adaptation of large models, 2024.
\newblock URL \url{https://arxiv.org/abs/2402.12354}.

\bibitem[He et~al.(2022)He, Zhou, Ma, Berg-Kirkpatrick, and Neubig]{he2022unifiedviewparameterefficienttransfer}
Junxian He, Chunting Zhou, Xuezhe Ma, Taylor Berg-Kirkpatrick, and Graham Neubig.
\newblock Towards a unified view of parameter-efficient transfer learning, 2022.
\newblock URL \url{https://arxiv.org/abs/2110.04366}.

\bibitem[Houlsby et~al.(2019)Houlsby, Giurgiu, Jastrzebski, Morrone, de~Laroussilhe, Gesmundo, Attariyan, and Gelly]{houlsby2019parameterefficienttransferlearningnlp}
Neil Houlsby, Andrei Giurgiu, Stanislaw Jastrzebski, Bruna Morrone, Quentin de~Laroussilhe, Andrea Gesmundo, Mona Attariyan, and Sylvain Gelly.
\newblock Parameter-efficient transfer learning for nlp, 2019.
\newblock URL \url{https://arxiv.org/abs/1902.00751}.

\bibitem[Hu et~al.(2021)Hu, Shen, Wallis, Allen-Zhu, Li, Wang, Wang, and Chen]{hu2021loralowrankadaptationlarge}
Edward~J. Hu, Yelong Shen, Phillip Wallis, Zeyuan Allen-Zhu, Yuanzhi Li, Shean Wang, Lu~Wang, and Weizhu Chen.
\newblock Lora: Low-rank adaptation of large language models, 2021.
\newblock URL \url{https://arxiv.org/abs/2106.09685}.

\bibitem[Hu et~al.(2023)Hu, Wang, Lan, Xu, Lim, Bing, Xu, Poria, and Lee]{hu2023llmadaptersadapterfamilyparameterefficient}
Zhiqiang Hu, Lei Wang, Yihuai Lan, Wanyu Xu, Ee-Peng Lim, Lidong Bing, Xing Xu, Soujanya Poria, and Roy Ka-Wei Lee.
\newblock Llm-adapters: An adapter family for parameter-efficient fine-tuning of large language models, 2023.
\newblock URL \url{https://arxiv.org/abs/2304.01933}.

\bibitem[Jang et~al.(2024)Jang, Lee, and Ryu]{jang2024loratrainingntkregime}
Uijeong Jang, Jason~D. Lee, and Ernest~K. Ryu.
\newblock Lora training in the ntk regime has no spurious local minima, 2024.
\newblock URL \url{https://arxiv.org/abs/2402.11867}.

\bibitem[Kingma \& Ba(2017)Kingma and Ba]{kingma2017adammethodstochasticoptimization}
Diederik~P. Kingma and Jimmy Ba.
\newblock Adam: A method for stochastic optimization, 2017.
\newblock URL \url{https://arxiv.org/abs/1412.6980}.

\bibitem[Kopiczko et~al.(2024)Kopiczko, Blankevoort, and Asano]{kopiczko2024veravectorbasedrandommatrix}
Dawid~J. Kopiczko, Tijmen Blankevoort, and Yuki~M. Asano.
\newblock Vera: Vector-based random matrix adaptation, 2024.
\newblock URL \url{https://arxiv.org/abs/2310.11454}.

\bibitem[Lester et~al.(2021)Lester, Al-Rfou, and Constant]{lester2021powerscaleparameterefficientprompt}
Brian Lester, Rami Al-Rfou, and Noah Constant.
\newblock The power of scale for parameter-efficient prompt tuning, 2021.
\newblock URL \url{https://arxiv.org/abs/2104.08691}.

\bibitem[Li \& Liang(2021)Li and Liang]{li2021prefixtuningoptimizingcontinuousprompts}
Xiang~Lisa Li and Percy Liang.
\newblock Prefix-tuning: Optimizing continuous prompts for generation, 2021.
\newblock URL \url{https://arxiv.org/abs/2101.00190}.

\bibitem[Liang et~al.(2024)Liang, Liu, Chen, and Liu]{liang2024memoryefficientllmtrainingonline}
Kaizhao Liang, Bo~Liu, Lizhang Chen, and Qiang Liu.
\newblock Memory-efficient llm training with online subspace descent, 2024.
\newblock URL \url{https://arxiv.org/abs/2408.12857}.

\bibitem[Liu et~al.(2024)Liu, Wang, Yin, Molchanov, Wang, Cheng, and Chen]{liu2024doraweightdecomposedlowrankadaptation}
Shih-Yang Liu, Chien-Yi Wang, Hongxu Yin, Pavlo Molchanov, Yu-Chiang~Frank Wang, Kwang-Ting Cheng, and Min-Hung Chen.
\newblock Dora: Weight-decomposed low-rank adaptation, 2024.
\newblock URL \url{https://arxiv.org/abs/2402.09353}.

\bibitem[Liu et~al.(2022)Liu, Ji, Fu, Tam, Du, Yang, and Tang]{liu2022ptuningv2prompttuning}
Xiao Liu, Kaixuan Ji, Yicheng Fu, Weng~Lam Tam, Zhengxiao Du, Zhilin Yang, and Jie Tang.
\newblock P-tuning v2: Prompt tuning can be comparable to fine-tuning universally across scales and tasks, 2022.
\newblock URL \url{https://arxiv.org/abs/2110.07602}.

\bibitem[Loshchilov \& Hutter(2019)Loshchilov and Hutter]{loshchilov2019decoupledweightdecayregularization}
Ilya Loshchilov and Frank Hutter.
\newblock Decoupled weight decay regularization, 2019.
\newblock URL \url{https://arxiv.org/abs/1711.05101}.

\bibitem[Mahabadi et~al.(2021)Mahabadi, Ruder, Dehghani, and Henderson]{mahabadi2021parameterefficientmultitaskfinetuningtransformers}
Rabeeh~Karimi Mahabadi, Sebastian Ruder, Mostafa Dehghani, and James Henderson.
\newblock Parameter-efficient multi-task fine-tuning for transformers via shared hypernetworks, 2021.
\newblock URL \url{https://arxiv.org/abs/2106.04489}.

\bibitem[Mao et~al.(2024)Mao, Ge, Fan, Xu, Mi, Hu, and Gao]{mao2024surveyloralargelanguage}
Yuren Mao, Yuhang Ge, Yijiang Fan, Wenyi Xu, Yu~Mi, Zhonghao Hu, and Yunjun Gao.
\newblock A survey on lora of large language models, 2024.
\newblock URL \url{https://arxiv.org/abs/2407.11046}.

\bibitem[Marques et~al.(2018)Marques, Maciel, Naviner, Cai, and Yang]{marques2018review}
Elaine~Crespo Marques, Nilson Maciel, Lirida Naviner, Hao Cai, and Jun Yang.
\newblock A review of sparse recovery algorithms.
\newblock \emph{IEEE access}, 7:\penalty0 1300--1322, 2018.

\bibitem[Pati et~al.(1993)Pati, Rezaiifar, and Krishnaprasad]{pati1993orthogonal}
Yagyensh~Chandra Pati, Ramin Rezaiifar, and Perinkulam~Sambamurthy Krishnaprasad.
\newblock Orthogonal matching pursuit: Recursive function approximation with applications to wavelet decomposition.
\newblock In \emph{Proceedings of 27th Asilomar conference on signals, systems and computers}, pp.\  40--44. IEEE, 1993.

\bibitem[Pfeiffer et~al.(2021)Pfeiffer, Kamath, Rücklé, Cho, and Gurevych]{pfeiffer2021adapterfusionnondestructivetaskcomposition}
Jonas Pfeiffer, Aishwarya Kamath, Andreas Rücklé, Kyunghyun Cho, and Iryna Gurevych.
\newblock Adapterfusion: Non-destructive task composition for transfer learning, 2021.
\newblock URL \url{https://arxiv.org/abs/2005.00247}.

\bibitem[Qin et~al.(2024)Qin, Ma, Zheng, Li, Zhang, Liu, Luo, Liu, and Magno]{qin2024accuratelorafinetuningquantizationllms}
Haotong Qin, Xudong Ma, Xingyu Zheng, Xiaoyang Li, Yang Zhang, Shouda Liu, Jie Luo, Xianglong Liu, and Michele Magno.
\newblock Accurate lora-finetuning quantization of llms via information retention, 2024.
\newblock URL \url{https://arxiv.org/abs/2402.05445}.

\bibitem[Song et~al.(2024)Song, Li, Zhang, Zhao, and Du]{song2024sparsefinetuningpretrainedlarge}
Weixi Song, Zuchao Li, Lefei Zhang, Hai Zhao, and Bo~Du.
\newblock Sparse is enough in fine-tuning pre-trained large language models, 2024.
\newblock URL \url{https://arxiv.org/abs/2312.11875}.

\bibitem[Stich et~al.(2018)Stich, Cordonnier, and Jaggi]{stich2018sparsifiedsgdmemory}
Sebastian~U. Stich, Jean-Baptiste Cordonnier, and Martin Jaggi.
\newblock Sparsified sgd with memory, 2018.
\newblock URL \url{https://arxiv.org/abs/1809.07599}.

\bibitem[Taori et~al.(2023)Taori, Gulrajani, Zhang, Dubois, Li, Guestrin, Liang, and Hashimoto]{alpaca}
Rohan Taori, Ishaan Gulrajani, Tianyi Zhang, Yann Dubois, Xuechen Li, Carlos Guestrin, Percy Liang, and Tatsunori~B. Hashimoto.
\newblock Stanford alpaca: An instruction-following llama model.
\newblock \url{https://github.com/tatsu-lab/stanford_alpaca}, 2023.

\bibitem[Touvron et~al.(2023)Touvron, Martin, Stone, Albert, Almahairi, Babaei, Bashlykov, Batra, Bhargava, Bhosale, and et~al]{touvron2023llama2openfoundation}
Hugo Touvron, Louis Martin, Kevin Stone, Peter Albert, Amjad Almahairi, Yasmine Babaei, Nikolay Bashlykov, Soumya Batra, Prajjwal Bhargava, Shruti Bhosale, and et~al.
\newblock Llama 2: Open foundation and fine-tuned chat models, 2023.
\newblock URL \url{https://arxiv.org/abs/2307.09288}.

\bibitem[Wu et~al.(2024)Wu, Li, Zhang, Xu, Wu, Zhao, and Liu]{wu2024cgfedllmcompressgradientsfederated}
Huiwen Wu, Xiaohan Li, Deyi Zhang, Xiaogang Xu, Jiafei Wu, Puning Zhao, and Zhe Liu.
\newblock Cg-fedllm: How to compress gradients in federated fune-tuning for large language models, 2024.
\newblock URL \url{https://arxiv.org/abs/2405.13746}.

\bibitem[Xia et~al.(2024)Xia, Qin, and Hazan]{xia2024chainloraefficientfinetuning}
Wenhan Xia, Chengwei Qin, and Elad Hazan.
\newblock Chain of lora: Efficient fine-tuning of language models via residual learning, 2024.
\newblock URL \url{https://arxiv.org/abs/2401.04151}.

\bibitem[Zaken et~al.(2022)Zaken, Ravfogel, and Goldberg]{zaken2022bitfitsimpleparameterefficientfinetuning}
Elad~Ben Zaken, Shauli Ravfogel, and Yoav Goldberg.
\newblock Bitfit: Simple parameter-efficient fine-tuning for transformer-based masked language-models, 2022.
\newblock URL \url{https://arxiv.org/abs/2106.10199}.

\bibitem[Zhang et~al.(2023)Zhang, Chen, Bukharin, Karampatziakis, He, Cheng, Chen, and Zhao]{zhang2023adaloraadaptivebudgetallocation}
Qingru Zhang, Minshuo Chen, Alexander Bukharin, Nikos Karampatziakis, Pengcheng He, Yu~Cheng, Weizhu Chen, and Tuo Zhao.
\newblock Adalora: Adaptive budget allocation for parameter-efficient fine-tuning, 2023.
\newblock URL \url{https://arxiv.org/abs/2303.10512}.

\bibitem[Zhao et~al.(2024)Zhao, Zhang, Chen, Wang, Anandkumar, and Tian]{zhao2024galorememoryefficientllmtraining}
Jiawei Zhao, Zhenyu Zhang, Beidi Chen, Zhangyang Wang, Anima Anandkumar, and Yuandong Tian.
\newblock Galore: Memory-efficient llm training by gradient low-rank projection, 2024.
\newblock URL \url{https://arxiv.org/abs/2403.03507}.

\bibitem[Zhu et~al.(2020)Zhu, Chen, and Wu]{zhu2020efficient}
Hufei Zhu, Wen Chen, and Yanpeng Wu.
\newblock Efficient implementations for orthogonal matching pursuit.
\newblock \emph{Electronics}, 9\penalty0 (9):\penalty0 1507, 2020.

\end{thebibliography}
\bibliographystyle{tmlr}

\clearpage

\appendix
\section{GaLore Analysis}
\label{galore_appendix}
\vspace{-5pt} 
Rather than operating on the parameter space, GaLore saves memory by reducing the number of parameters in the optimizer states \citep{zhao2024galorememoryefficientllmtraining}. Specifically, it projects the gradient $\mG_t \in \R^{m \times n}$ at each time step $t$ to a lower-dimensional representation $\hat{\mG}_t = \mP_t \mG_t \in R^{r \times n}$ by using a projection matrix $\mP_t \in \R^{r \times m}$ that is set to the first $r$ columns of the left singular vectors of SVD of $\mG_t$. The size of the optimizer states, which are equal to the dimensions of the projected gradient $\hat{\mG}_t$ is then reduced, providing memory savings. However, observe that $\hat{\mG}_t$ is still dependent on $n$, meaning that, similar to LoRA, there exists a bound dependent on $n$ that we cannot reduce the number of optimizer states any further. Likewise, granularity over parameters is a function of $n$, and tied to the model's weight dimensions.  

\section{Efficient Orthogonal Matching Pursuit} \label{omp_appendix}
\vspace{-5pt} 
Our implementation of OMP is based on the inverse Cholesky factorization method \citep{zhu2020efficient}, see Algorithm \ref{omp_full}. We perform pre-calculation of the gram matrix $\mG$, to reduce computational costs, but introduce additional memory requirements. For memory efficiency, $\mG$ should not be pre-computed or alternatively, it is possible to implement a more memory efficient Algorithm \ref{omp_full} at the expense of additional runtime.

\begin{algorithm}
\caption{OMP by Inverse Cholesky Factorization}
\begin{algorithmic}
    \Require Measurements $\bm{y}$, projection matrix $\mA$, sparsity value $s$
    \State \textbf{Initialize:} $\Lambda_0 = \emptyset$, the residual $\bm{r}^{(0)} = \bm{y}$, gram matrix $\mG = \mA^H \mA$, and the iteration counter $k = 1$.
    \While{$k \le s$}
    \State \textbf{Projection:} if $k=1$, compute $\bm{p}^{0} = \mA^H \bm{r}^{0}$, else
    \[
    \bm{p}^{(k-1)} = \bm{p}^{(k-2)} - \bm{b}_{:(k-1)}a_{k-1},
    \]
    where $\bm{b}_{:(k-1)}$ is the $(k-1)$-th column of $\mB_{k-1}$, and $\bm{a}_{k-1}$ is the $(k-1)$-th entry of $\bm{a}_{k-1}$.
    \State \textbf{Select} $i^{(k)} = \arg\max_{i=1,2,...,d} \left(\frac{|p_i^{(k-1)}|}{\|\mA_{:i}\|}\right)$, where $p_i^{(k-1)}$ is the $i$-th entry of $\bm{p}^{(k-1)}$.
    \[
    \text{Let }\Lambda_k = \Lambda_{k-1} \cup \{i^{(k)}\}, \quad \text{i.e.,} \quad \lambda_k = i^{(k)} \text{ is the k-th entry of the set } \Lambda_k.
    \]
    \State \textbf{Obtain}
    \[
    \bm{c}_{k-1} = \left(\bm{b}_{\lambda_{k}, 1:\Lambda_{k-1}}^H\right)^H,
    \]
    where $\bm{b}_{\lambda_{k}, 1:\Lambda_{k-1}}$ is the $\lambda_{k}$-th row of $\mB_{k-1}$. Then compute $\gamma_k = \frac{1}{\sqrt{g_{\lambda_k, \lambda_k} - \bm{c}^H_{k-1}\bm{c}_{k-1}}}$,
    \[
    a_k = \gamma_k p^{k-1}_{\lambda_k},
    \]
    \[
    \bm{a}_k = \begin{bmatrix}
           \bm{a}_{k-1}^T \quad a_{:k} 
         \end{bmatrix}^T,
    \]
    \[
    \bm{b}_{:k} = \gamma_k\left(\bm{g}_{:\lambda_k} - \mB_{k-1}\bm{c}_{k-1}\right),
    \]
    \[
    \mB_k = \begin{bmatrix}
           \mB_{k-1}^T \quad b_{:k} 
         \end{bmatrix},
    \]
    where $p_{\lambda_k}^{k-1}$ is the $\lambda_k$-th entry of $\bm{p}^{k-1}$, $\bm{g}_{:\lambda_k}^k$ is the $\lambda_k$-th column of $\mG$, and $\bm{c}_0 = \mB_0 = \bm{a}_0 = \emptyset$ is assumed for $k = 1$. Finally, if $k = 1$, compute $\mF_1=\sqrt{g_{\lambda_1, \lambda_1}}$, else
    \[
    \mF_k = \begin{bmatrix}
                   \mF_{k-1} & -\gamma_k \mF_{k-1} \bm{c}_{k-1} \\
                   \bm{0}_{k-1} & \gamma_k
                 \end{bmatrix},
    \]
    
    $k := k + 1$.
    \EndWhile
    \State \textbf{Output:} Compute $\hat{\bm{x}}_s = \mF_s\bm{a}_s$, \text{ } $\bm{r}^{(s)} = y - \mA_{\Lambda_s} \hat{\bm{x}}_s$, \text{ } and return $\bm{r}^{(s)}, \Lambda_s, \hat{\bm{x}}_s$.
\end{algorithmic}
\label{omp_full}
\end{algorithm}

\clearpage

\section{Extensions of SGC}
\label{appendix_sgc}
In practice, having a static projection matrix $\mA$ is heavily dependent on the initialization, and can potentially lead to slower convergence. To address this, we can adjust $\mA$ every $T$ iterations, and modify SGC to obtain SGCA outlined in Algorithm \ref{SGCA}. Lines $9$ initializes a new random projection matrix $\mA'$ to enable future gradients $\mG_t$ to be projected into another subspace. Lines $10-11$ are necessary to ensure the current $\mM_t$ and $\mV_t$ terms are re-aligned using $\mA'$ such that we can perform OMP at the next time step. Algorithm \ref{SGCA} can improve performance but comes at a cost of increased runtime, since we need to run OMP two more times. Alternatively, it can be possible to store the results from first call but requires additional memory requirements.

\begin{algorithm}[t]
\caption{SGCA at timestep \textit{t}}
\label{SGCA}
\begin{algorithmic}[1]
\Require $\mG_t, \mA, s, \beta_1, \beta_2, \epsilon$
\State $\bm{p}_t \gets \bm{A} \text{ Sparsify}_s (\bm{G}_t)$
\State $\bm{q}_t \gets \bm{A} \text{ Sparsify}_s (\bm{G}_t^2)$
\State $\mM_t \gets \beta_1 \mM_{t-1} + (1 - \beta_1) \bm{p}_t$
\State $\mV_t \gets \beta_2 \mV_{t-1} + (1 - \beta_2) \bm{q}_t$
\State $\mM_t \gets \frac{\mM_t}{1 - \beta_1^t}$
\State $\mV_t \gets \frac{\mV_t}{1 - \beta_2^t}$
\State $\mN_t \gets \alpha \frac{\text{OMP}_{\bm{A}}(\mM_t)}{\sqrt{\text{OMP}_{\bm{A}}(\mV_t)} + \epsilon}$
\If{$t \bmod T = 0$}
    \State Sample $\mA' \sim \mathcal{N}\left(\bm{0}, \frac{1}{\sqrt{k}} \bm{1} \right)$
    \State $\mM_t \gets \mA' \text{OMP}_{\mA}(\mM_t)$
    \State $\mV_t \gets \mA' \text{OMP}_{\mA}(\mV_t)$
    \State $\mA \gets \mA'$
\EndIf
\State \Return $\mN_t$
\end{algorithmic}
\end{algorithm}

\section{Fine-Tuning Experiments}
\subsection{Commonsense Reasoning} \label{commonsense_details}
We fine-tune pretrained LLaMA2-7B, LLaMA2-13B, and LLaMA3-8B models obtained from Hugging Face. We trained each model for 1 epoch on the full commonsense dataset consisting of 170k examples. For consistency, we used a batch size of 16 across all experiments and train for $1$ epoch. Since the goal is to observe performance improvements with only training a limited number of parameters, we only fine-tune on two of the attention matrices, keeping everything else frozen. For LlaMA2-7B and LLaMA-2-13B, we target the query and value matrices, whilst for LLaMA3-8B, we targeted the query output matrices. For LLaMA3-8B, we select the output matrix instead of the value matrix to keep the dimensions consistent for comparison. Full details of hyperparameters can be found in Table \ref{commonsense_hyper}.

\subsection{Knowledge Evaluation}
\label{knowledge_evaluation}
We fine-tune Mistral-7B model obtained from Hugging face using 1 epoch on a 10k subset of the cleaned Alpaca dataset. We only target the the query and value matrices and follow a similar selection policy as the commonsense reasoning task for the remaining hyperparameters (see Table \ref{alpaca_hyper} for details).

\begin{table}[t]
\caption{Hyperparameters used for commonsense reasoning experiments.}
\begin{adjustbox}{max width=\textwidth}
\centering
\begin{tabular}{c|c|c|c|c|c|c|c}
\toprule
Model & Method & learning rate & rank $r$ & num. chunks $c$  & sparsity $s$ & $\kappa$ & $\alpha$ \\
\midrule
\multirow{4}{*}{LLaMA2-7B} & Full Finetuning & 1e-5 & - & - & - & - & -\\
& CESGC & 2e-5 & 32 & 64 & 1984 & 7 & 2\\
& GaLore & 2e-5 & 4 & - & - & - & 2\\
& LoRA & 1e-4 & 4 & - & - & - & -\\
\midrule
\multirow{4}{*}{LLaMA3-8B} & Full Finetuning & 1e-5 & - & - & - & - & -\\
& CESGC & 2e-5 & 32 & 64 & 1984 & 7 & 2\\
& GaLore & 2e-5 & 4 & - & - & - & -\\
& LoRA & 1e-4 & 4 & - & - & - & -\\
\midrule
\multirow{4}{*}{LLaMA2-13B} & Full Finetuning & 1e-5 & - & - & - & - & -\\
& CESGC & 3e-5 & 32 & 64 & 2496 & 7 & 2\\
& GaLore & 3e-5 & 4 & - & - & - & 2\\
& LoRA & 1e-4 & 4 & - & - & - & -\\
\bottomrule
\end{tabular}
\end{adjustbox}
\label{commonsense_hyper}
\end{table}

\begin{table}[t]
\centering 
\caption{Hyperparameters used for knowledge evaluation experiment.}
\begin{adjustbox}{max width=\textwidth}
\begin{tabular}{c|c|c|c|c|c|c|c}
\toprule
Model & Method & learning rate & rank $r$ & num. chunks $c$  & sparsity $s$ & $\kappa$ & $\alpha$ \\
\midrule
\multirow{3}{*}{Mistral-7B}
& CESGC & 2e-5 & 32 & 64 & 1984 & 7 & 2\\
& GaLore & 2e-5 & 4 & - & - & - & 2\\
& LoRA & 1e-4 & 4 & - & - & - & -\\
\bottomrule
\end{tabular}
\end{adjustbox}
\label{alpaca_hyper}
\end{table}

\subsection{Memory Efficiency}
\label{appendix_memeff}
For this experiment, we apply the MESGC algorithm. First, we select a subset of $10$k examples from the full commonsense dataset and fine-tune the LLaMA2-7B model, evaluating on all commonsense reasoning tasks. We used a batch size of 16 across all experiments and train for 1 epoch is used. The full results can be found in Table \ref{commonsense_min} and hyperparameters in Table \ref{memefftable}.

\begin{table}[t!]
\caption{LLaMA2-7B results on commonsense reasoning for MESGC.}
\begin{adjustbox}{max width=\textwidth}
\centering
\begin{tabular}{c|c|c|c|c|c|c|c|c|c}
\toprule
Method & \textbf{ARC-e} & \textbf{ARC-c} & \textbf{BoolQ} & \textbf{HellaSwag} & \textbf{OBQA} & \textbf{PIQA} & \textbf{SIQA} & \textbf{WinoGrande} & \textbf{Average} \\
\midrule
CESGC & 80.9 & 53.4 & 82.4 & 78.4 & 43.8 & 79.9 & 52.3 & 73.2 & \textbf{68.0}\\
GaLore & 80.2 & 52.2 & 79.0 & 78.4 & 43.0 & 80.5 & 51.6 & 74.0 & 67.4\\
LoRA & 80.9 & 52.2 & 79.5 & 78.5 & 44.6 & 80.0 & 51.7 & 73.9 & 67.7\\
\bottomrule
\end{tabular}
\end{adjustbox}
\label{commonsense_min}
\end{table}

\begin{table}[t!]
\caption{Hyperparameters used for commonsense reasoning for MESGC.}
\centering
\begin{tabular}{c|c|c|c|c|c|c}
\toprule
Method & learning rate & rank $r$ & num. chunks $c$  & sparsity $s$ & $\kappa$ & $\alpha$ \\
\midrule
MESGC & 2e-5 & - & 256 & 256 & 8 & 2\\
GaLore & 2e-5 & 1 & - & - & - & 2\\
LoRA & 1e-4 & 1 & - & - & - & -\\
\bottomrule
\end{tabular}
\label{memefftable}
\end{table}

\subsection{Fine-tuning on Small Datasets}
\label{appendix_small}
We first obtain a subset consisting of $2000$ samples from the BoolQ dataset. We then create four partitions of data ranging in size from $500$ to $2000$ examples, in increments of $500$. For this experiment, we are interested in comparing performance between our approach and baselines given equal optimizer state sizes. Thus, we set the total number of optimizer states to $8192$, and perform fine-tuning with batch size $16$ over $2$ epochs using LLaMA2-7B based on the settings shown in Table \ref{sd_finetuning}.

\begin{table}[t!]
\caption{Hyperparameters used for fine-tuning BoolQ.}
\centering
\begin{tabular}{c|c|c|c|c|c|c}
\toprule
Method & learning rate & rank $r$ & num. chunks $c$  & sparsity $s$ & $\kappa$ & $\alpha$ \\
\midrule
CESGC & 2e-5 & 8 & 64 & 64 & 8 & 2\\
GaLore & 2e-5 & 1 & - & - & - & 2\\
LoRA & 1e-4 & 1 & - & - & - & -\\
\bottomrule
\end{tabular}
\label{sd_finetuning}
\end{table}

\subsection{Ablation Study}
\label{appendix_ablation}
For chunks $c$ and sparsity $s$ studies, we fine-tuned on the LLaMA2-7B model fine-tuned on a subset of $30$k examples using commonsense reasoning dataset. For the chunk size study, we performed the experiment based on our MESGC approach, while for sparsity, we used CESGC. Finally, different values of $\kappa$ was tested on the full commonsense dataset using CESGC. The same batch size of $16$, training epochs of $1$, learning rate, $\eta=2e^{-5}$ and alpha, $\alpha=2$ is used for all three studies. Other hyperparameter details are shown in Table \ref{ablationtable}.

\begin{table}[t!]
\caption{Hyperparameters used for ablation study.}
\begin{adjustbox}{max width=\textwidth}
\centering
\begin{tabular}{c|c|c|c|c|c}
\toprule
Study & Method & rank $r$ & num. chunks $c$  & sparsity $s$ & $\kappa$\\
\midrule
Chunks $c$ & MESGC & - & 256, 512, 1024, 2048, 4096 & 4096 & 7\\
Sparsity $s$ & CESGC & 32 & 64 & 64, 4096, 16384, 32768, 65536 & 7\\
Kappa $\kappa$ & CESGC & 32 & 64 & 1984 & 6, 7, 8\\
\bottomrule
\end{tabular}
\end{adjustbox}
\label{ablationtable}
\end{table}

\end{document}